\newtheorem{theorem}{Theorem}
\newtheorem{corollary}{Corollary}
\newtheorem{lemma}{Lemma}
\begin{document}

\title{Improving the Performance of Echo State Networks Through State Feedback}
\author[1]{Peter J.~Ehlers}
\ead{ehlersp@arizona.edu}
\author[2]{Hendra I.~Nurdin}
\ead{h.nurdin@unsw.edu.au}
\author[1]{Daniel Soh}
\ead{danielsoh@optics.arizona.edu}

\affiliation[1]{organization={Wyant College of Optical Sciences, University of Arizona},
city={Tuscon, Arizona},
country={USA}}
\affiliation[2]{organization={School of Electrical Engineering and Telecommunications, University of New South Wales},
city={Sydney},
country={Australia}}

\begin{abstract}
 Reservoir computing, using nonlinear dynamical systems, offers a cost-effective alternative to neural networks for complex tasks involving processing of sequential data, time series modeling, and system identification. Echo state networks (ESNs), a type of reservoir computer, mirror neural networks but simplify training. They apply fixed, random linear transformations to the internal state, followed by nonlinear changes. This process, guided by input signals and linear regression, adapts the system to match target characteristics, reducing computational demands. A potential drawback of ESNs is that the fixed reservoir may not offer the complexity needed for specific problems. While directly altering (training) the internal ESN would reintroduce the computational burden, an indirect modification can be achieved by redirecting some output as input. This feedback can influence the internal reservoir state, yielding ESNs with enhanced complexity suitable for broader challenges. In this paper, we demonstrate that by feeding some component of the reservoir state back into the network through the input, we can drastically improve upon the performance of a given ESN. We rigorously prove that, for any given ESN, feedback will almost always improve the accuracy of the output. For a set of three tasks, each representing different problem classes, we find that with feedback the average error measures are reduced by $30\%-60\%$. Remarkably, feedback provides at least an equivalent performance boost to doubling the initial number of computational nodes, a computationally expensive and technologically challenging alternative. These results demonstrate the broad applicability and substantial usefulness of this feedback scheme.
\end{abstract}

\begin{keyword}
    Reservoir Computing, Echo State Network, Feedback Improvement
\end{keyword}

\maketitle

\section{Introduction}
Compared to recurrent neural networks where excellent performance could only be obtained with very computationally expensive system adjustment procedures, the premise of reservoir computing is to use a \emph{fixed}  nonlinear dynamical system of the form \eqref{eq:states}-\eqref{eq:output} to perform signal processing tasks: 
\begin{align}
x_{k+1} &= f(x_k,u_k), \label{eq:states}\\
\hat{y}_k &= W^\top x_k + C, \label{eq:output}
\end{align}
where $u_k$ is the input signal at time $k$ \cite{lukovsevivcius2009reservoir,schrauwen2007overview,tanaka2019recent}.  The output $\hat{y}_k$ of the dynamical system is then typically taken as a simple linear combination of the states (nodes) $x$ of the dynamical system as given in \eqref{eq:output} plus some constant $C$ (as the output bias), where $W$ is a weight matrix and $W^\top$ is its transpose.  The nodes correspond to a basis map that is used to approximate an unknown map which maps input (discrete-time) sequences to output sequences that are to be learned by the dynamical system. Using the linear combination of states makes the training extremely straightforward and efficient as the weight matrix $W$ can be determined by a simple linear regression.

Reservoir computers (RCs) have been extensively used to predict deterministic sequences, in particular chaotic sequences, and data-based chaotic system modelling, see, e.g., \cite{JH04,pathak2018model,Rafayelyan20}. In the deterministic setting they have found applications in channel equalization \cite{JH04}, chaos synchronisation and encryption \cite{antonik2018using}, and model-free observers for chaotic systems \cite{LPHGBO17}. RCs have also been studied for the modelling of stochastic signals and systems with applications including time series modelling, forecasting, filtering and system identification \cite{grigoryeva2016reservoir,Grigoryeva:2018,gonon2019reservoir,Chen:2022}.  

Physical reservoir computing employs a device with complex temporal evolution, tapping into the computational power of a nonlinear dynamical system without extensive parameter optimization needed in typical neural networks. Inputs are fed into a reservoir, a natural system with complex dynamics, influencing its state based on current and past inputs due to its (limited) memory. The reservoir, running automatically, is altered only by these inputs. This approach models complex, nonlinear functions with minimal requirements: problem-related inputs and a linear fitting algorithm. Various physical platforms have been experimentally demonstrated for reservoir computing include, for instance, photonics \cite{larger2017high,Rafayelyan20}, spintronics \cite{torrejon2017neuromorphic} and even quantum systems \cite{chen2020temporal, Suzuki22,Yasuda23}. For a review of physical reservoir computing and quantum reservoir computing, see, e.g., \cite{tanaka2019recent,nakajima2021reservoir,Mujal21,markovic2020quantum}.

An echo state network (ESN) \cite{JH04,jaeger2001echo,jaeger2007echo,lukovsevivcius2012practical} is a type of RC using an iterative structure for adding nonlinearity to inputs. It is similar to a recurrent neural network, except that the neural weights are fixed and optimization occurs only at the output layer. In an ESN, the reservoir state at time step $k$ is represented by vector $x_k$, equivalent to the neural outputs at step $k$. Each step involves applying a fixed linear transformation given by a matrix $A$ to $x_k$, then adding to it a vector $B$ times the input value $u_k$, forming a new vector $z_k$. The matrix $A$ represents the fixed neural weights, while the vector $B$ represents the biases. A nonlinear transformation on each element of $z_k$ generates $x_{k+1}$, akin to neuron outputs. The affine function $\hat{y}_k$ of $x_k$ given in \eqref{eq:output} is then fit to a target value sequence ${y_k}$, giving an output $\hat{y}_k\approx y_k$ that approximates the target system. We describe the ESN framework with further detail in Section \ref{sec:ESN}.

The main drawback of this approach is that any specific ESN is only going to be effective for a certain subset of problems because the transformations that the reservoir applies are fixed, so a specific reservoir will tend to modify the inputs in the same way, leading to a limited range of potential outputs. It has been shown in \cite{Grigoryeva:2018} that ESNs as a whole are universal, meaning that for any target sequence $\{y_k\}$ and a given input sequence $\{u_k\}$, there will be an ESN with specific choices of $A, B,$ that can approximate it to any desired accuracy. However, it may not be practically feasible to find a sufficiently accurate ESN for a particular problem of interest as it may require choosing an excessively and impractically large ESN, and one may have to settle for an ESN with a weaker performance instead.

There have been previous efforts related to the above issue. In the context of an autonomous ESN with no external driving input, the work \cite{SA09} introduces a number of architectures. One architecture includes adding a second auxiliary ESN network besides the principal ``generator" ESN. The auxiliary is fed by a tunable linear combination of some nodes of the generator ESN, while a fixed (non-tunable) linear combination of the nodes of the auxiliary ESN is fed back to the generator network. The same error signal at the output of the generator ESN is used to train both the output weight of the principal and the weights that connect the generator to the auxiliary. The weight update is done recursively through an algorithm called the First-Order, Reduced and Controlled Error (FORCE) learning algorithm, which is in turn based on the recursive least squares algorithm.  A second architecture does not use feedback but allows modification of the some internal weights of the ESN besides the output weight, as in a conventional recurrent neural network. The internal and output weights are also updated using the FORCE algorithm. In \cite{FBD20}, multiple ESNs with tunable output weights that are interconnected in a fixed feedforward architecture (with no feedback loops) are considered. A set of completely known but randomly generated ``surrogate'' ESNs are coupled according to some architecture and  trained by simulation (``in silico") using the backpropagation algorithm for artificial neural networks. The ``intermediate" signals generated at the output of each component ESN are then used to train the output weights of another set of random ESNs, representing the ``true" ESNs that will be deployed, in the same architecture. The output weights of the individual true ESNs can be trained by linear regression. In \cite{MJS07}, in the continuous time setting, it was shown that an affine nonlinear system given by the nonlinear ODE:
\begin{align*}
\dot{x}_i &= f(x_1,\ldots,x_n) + g(x_1,\ldots,x_n)v,\ i=1,2,\ldots,n,
\end{align*}
for scalar real functions $x_1,\ldots,x_n$ and $v$
is universal in the sense that it can exactly emulate any other $n$-th order ODE of the form
\begin{align*}
z^{(n)} &= G(z,z^{(1)},\ldots,z^{(n-1)}) + u,
\end{align*}
that is driven by a signal $u$, where $z$ is a scalar signal and $z^{(j)}$ denotes the $j$-th derivative of $z$ with respect to time. The emulation is achieved by appropriately choosing scalar-valued real functions $K$ and $h$ and setting $v=K(x,u)$ and $z=h(x)$, where $x$ is the column vector $x=(x_1,\ldots,x_n)^{\top}$, where $\top$ denotes the transpose of a matrix. Also, any system of $k$ higher order ODEs in $z$ of the form above can be emulated by using $k$ different feedback terms. The technical report \cite{Luko07} considers the additional linear feedback of an auxiliary signal $z$, in general as a separate input $u_z$ into the ESN besides $u$, through a randomly generated matrix $W_z$ as $u_z = W_z z$. The signal $z$ is taken to be some linear combination of $x$, $z=V^{\top}x$. To determine $V$, some strategies for setting a training signal  $z_{train}$ for $z$ are proposed. $V$ is then computed by linear regression to minimize the mean-square error $\sum_{k=k_0}^N \|z_{train,k}-V^{\top} x_k\|^2$ over the time interval from $k_0$ to $N$, independently of $W$.

In this work, unlike \cite{SA09}, we are interested in ESNs that are driven by external input and are required to be convergent (forget their initial condition). Notably, the ESN in \cite{SA09} could not be convergent because a limit cycle was used to generate a periodic output without any inputs in the ESN. When driven by an external input such networks may produce an output diverging in time. Also, while the study in \cite{SA09} is motivated by biological networks, our work is motivated by the applications of physical reservoir computing. In particular, we are interested in enhancing the performance of a fixed physical RC by adding a simple but tunable structure external to the computer. Also, in contrast to \cite{FBD20}, in this paper we do not consider multiple interconnected ESNs in a feedforward architecture and do not use surrogates, but train a single  ESN augmented with a feedback loop directly with the data. While our work is related to \cite{MJS07} as discussed above, we do not seek universal emulation, but consider a linear feedback $K(x)=V^{\top}x$ that only depends on the state $x$, but not the input $u$, to enhance the approximation ability of ESNs. Also, compared to \cite{Luko07}, herein we consider feeding back a linear combination of the state $V^{\top}x$ by adding it to $u$ (i.e., by the substitution $u + V^{\top}x \rightarrow u$) without any modifications to the internal ESN structure, and optimize the weights $V$ through gradient descent. We do not introduce an additional artificial training sequence to determine the feedback $V^{\top}x$. Moreover, we provide a mathematical analysis of the generic performance advantage enabled by our scheme.

The goal of this paper is to study the use of feedback in the context of ESNs and show that it  will improve the performance of ESNs in the overwhelming majority of cases. Our proposal is to feed a linear function of the reservoir state back into the network as input. That is, for some vector $V$, we change the input from $u_k$ to $u_k +V^\top x_k$. We then optimize $V$ with respect to the cost function to achieve a better fit to the target sequence $\{y_k\}$. This has the effect of changing the linear transformation $A$ that the reservoir performs on $x_k$ at each time step, allowing us to partially control how the reservoir state evolves without modifying the reservoir itself. This will in essence provide us with a wider range of possible outputs for any given ESN, and can provide smaller ESNs an accuracy boost that makes them comparable to larger ones. Thus, our new paradigm of ESNs with feedback generates a significant performance boost with minimal perturbation of the system. We offer a thorough proof of a broad theorem, confidently ensuring that almost any ESN will experience a performance enhancement when a feedback mechanism is implemented, making this new scheme of ESNs with feedback universally applicable.

The structure of the paper is as follows. In Section \ref{sec:theory}, we provide some background on reservoir computers and echo state networks and introduce our feedback procedure. In Section \ref{sec:proof}, we provide a proof of the superiority of ESNs with feedback. In Section \ref{sec:optimize}, we describe how the new parameters introduced by feedback are optimized. In Section \ref{sec:tasks}, we provide numerical results that demonstrate the effectiveness of feedback for several different representative tasks. Finally, in Section \ref{sec:conclude} we give our concluding remarks.

In this paper, we denote the transpose of a matrix $M$ as $M^\top$, with the same notation used for vectors. The $n\times n$ identity matrix is written as $\mathbb{I}_n$, while the zero matrix of any size (including any zero vector) is written as $\mathbf{0}$. We treat an $n$-dimensional vector as an $n\times 1$ rectangular matrix in terms of notation, and in particular the outer product of two vectors $v_1$ and $v_2$ is written as $v_1 v_2^\top$. The vector norm $||v||$ denotes the standard 2-norm $||v||_2 = \sqrt{v^\top v}$. For a sequence whose $k$th element is given by $a_k$, we denote the entire sequence as $\{a_k\}$. When finite, a sum of the elements of such a sequence is often written notationally as if they were a sample of some stochastic process. We write weighted sums of these sequences as determinstic ``expectation" values (averages), so that for a sequence $\{a_k\}$ with $N$ entries starting from $k=0$ we may write $\langle a\rangle = \frac{1}{N}\sum_{k=0}^{N-1}a_k$. We also define the mean of such a sequence as $\overline{a} = \langle a\rangle$ and its variance as $\sigma_a^2 = \langle (a - \overline{a})^2\rangle$. The expectation operator is denoted by $\mathbb{E}[\cdot]$, the expectation of a random variable $X$ is denoted by $\mathbb{E}[X]$ and the conditional expectation of a random variable $X$ given random variables $Y_1,\ldots,Y_m$ is denoted by $\mathbb{E}[X | Y_1,\ldots,Y_m]$. We will denote the input and the output sequences of the training data as $\{ u_k \} = \{ u_k \}_{k=1,\ldots,N}, \{ y_k \} = \{ y_k \}_{k=1,\ldots,N}$, respectively. 

\section{Theory of Reservoir Computing with Feedback}
\label{sec:theory}
\subsection{Reservoir Computing and Echo State Networks} \label{sec:ESN}

A general RC is described by the following two equations:
\begin{align}
    x_{k+1} &= f(x_k, u_k) \\
    \hat{y}_k &= h(x_k),
\end{align}
where $x_k$ is a vector representing the reservoir state at time step $k$, $u_k$ is the $k$th member of some input sequence, and $\hat{y}_k$ is the predicted output. The function $f(x, u)$ is defined by the reservoir and is fixed, but the output function $h(x)$ is fit to the target sequence $y_k$ by minimizing a cost function $S$. In practice we usually choose (for $N$ training data points)
\begin{align}
    h(x) = W^\top x + C \label{eq:h-def}\\
    S = \frac{1}{2N}\sum_{k=0}^{N-1}(y_k-\hat{y}_k)^2,\label{eq:cost-function}
\end{align}
where the scalar $C$ and vector $W$ are chosen to minimize $S$, so that the problem of fitting the output function to data is just a linear regression problem. This setup is what enables the simulation and prediction of complex phenomena with a low computational overhead, because the reservoir dynamics encoded in $f(x, u)$ are complex enough to get a nonlinear function of the inputs $\{u_k\}$ that can then be made to approximate $\{y_k\}$ using linear regression.

In order for a RC to work, it must obey what is known as the {\em (uniform) convergence property}, or {\em echo state property} \cite{Chen:2022, jaeger2001echo, Manjunath:2013}. It states that, for a reservoir defined by the function $f(x, u)$ and a given input sequence $\{u_k\}$ defined for all $k\in\mathbb{Z}$, there exists a unique sequence of reservoir states $\{x_k\}$ that satisfy $x_{k+1}=f(x_k, u_k)$ for all $k\in\mathbb{Z}$. The consequence of this property is that the initial state of the reservoir in the infinite past does not have any bearing on what the current reservoir state is. This consequence combined with the continuity of $f(x, u)$ leads to the fading memory property \cite{Boyd:1985}, which tells us that the dependence of $x_k$ on an input $u_{k_0}$ for $k>k_0$ must dwindle continuously to zero as $k-k_0$ tends to infinity. This means that any initial state dependence should become negligible after the RC runs for a certain amount of time, so that the RC is reusable and produces repeatable, deterministic results while also retaining some memory capacity for past inputs.

It has been shown \cite{Grigoryeva:2018, Tran:2019} that a given RC will have the uniform convergence property if the reservoir dynamics $f(x, u)$ are contracting, or in other words if it satisfies 
\begin{align}
    ||f(x_1, u) - f(x_2, u)|| \leq \epsilon ||x_1 - x_2||,
\end{align}
where $\epsilon$ is some real number $0<\epsilon<1$. The norm in this inequality is arbitrary (as all norms on finite-dimensional metric spaces have equivalent effects), but it is usually chosen to be the standard vector norm $||v||_2 = \sqrt{v^\top v}$. This ensures that all reservoir states $x$ will be driven toward the same sequence of states defined by the inputs $\{u_k\}$.

An ESN is a specific type of RC described above, with 
\begin{align}
    f(x, u) = g(A x + B u), \label{eq:ESN-model}
\end{align}
where $A$ and $B$ are a random but fixed matrix and vector, respectively, while $g(z)$ is a nonlinear function that acts on each component of its input $z$. Throughout the paper we will take the dimension of the state $x$ to be $n$, and the dimensions of $A$ and $B$ to be $n \times n$ and $n \times 1$, respectively. For the output of the ESN we have that $C$ is a real scalar and $W$ is a real column vector of dimension $n$. This design gives the ESN resemblance to a typical neural network, where the linear transformation $z_k = A x_k + B u_k$ defines the input into the array of neurons, with $A$ providing the weights and $B u_k$ providing a bias. The element-wise nonlinear function $g(z)$ gives the array of outputs of the neurons as a function of the weighted inputs. The choices of $g, A,$ and $B$ define a specific ESN, though in practice $g$ is often chosen to be one of a specific set of preferred functions such as the sigmoid $\sigma(z) = (1 + e^{-z})^{-1}$ or $\tanh(z)$ functions. In this work, we choose the sigmoid function for our numerical results.

The convergence of the ESN can be guaranteed by subjecting the matrix $A$ to the constraint that $A^\top A < a^2 \mathbb{I}_{n}$ for a constant $a>0$. In other words, the singular values of $A$ must all be strictly less than some number $a$ which is determined by $g(z)$. For the sigmoid function, we can use $a = 4$, while for the $\tanh$ function we use $a= 1$. This originates from proving that 
\begin{align}
    ||g(z_1) - g(z_2)|| \leq a^{-1} ||z_1 - z_2||
\end{align}
for all $z_1, z_2\in\mathbb{Z}$, so that the convergence inequality will always be satisfied as long as
\begin{align}
\label{eq:constraint}
    a^{-1} ||(A x_1 + B u) - (A x_2 + B u)|| = a^{-1} ||A (x_1 - x_2)|| \leq \epsilon ||x_1 - x_2||,
\end{align}
for some $0<\epsilon<1$. Note that this is a sufficient but not necessary condition, as there could be combinations of $A, B,$ and $\{u_k\}$ such that \begin{align}
    ||g(A x_1 + B u_k) - g(A x_2 + B u_k)|| \leq \epsilon ||x_1 - x_2||
\end{align}
for all $x_1, x_2,$ and $k$, but this singular value criterion is much easier to test and design for while still providing a large space of possible reservoirs to choose from.

We parameterize how well the output of the ESN matches the target data using the normalized mean-square error (NMSE). In a linear regression problem we can show that the mean-squared error is 
\begin{align}
\label{eq:MSE}
    \langle(y-\hat{y})^2\rangle &= \frac{1}{N}\sum_{k=0}^{N-1}(y_k-\hat{y}_k)^2 = 2S,
\end{align}
where we are averaging over the $N$ time steps corresponding to the training interval. With $\hat{y}_k = W^{\top} x_k + C$, we can show that
\begin{align}
    \langle(y-\hat{y})^2\rangle &= \langle(C+W^\top x-y)^2\rangle \\
    &= (C+W^\top \langle x\rangle-\langle y\rangle)^2+\langle(W^\top (x-\langle x\rangle)-(y-\langle y\rangle))^2\rangle \\
    &= (C+W^\top \langle x\rangle-\langle y\rangle)^2+W^\top K_{xx} W-2W^{\top} K_{xy}+\sigma_y^2,
\end{align}
where 
\begin{align}
\label{eq:Kdefn}
    K_{xx} &= \langle(x-\langle x\rangle)(x-\langle x\rangle)^\top\rangle = \langle x x^\top\rangle - \langle x\rangle\langle x\rangle^\top \\
    K_{xy} &= \langle(x-\langle x\rangle)(y-\langle y\rangle)\rangle = \langle x y\rangle - \langle x\rangle\langle y\rangle.
\end{align}
The values of $C$ and $W$ that minimize the mean-squared error are
\begin{align}
\label{eq:optimizeC}
    C = \langle y\rangle - W^\top \langle x\rangle \\
    W = K_{xx}^{-1} K_{xy}. \label{eq:optimize}
\end{align}
Note that since $K_{xx}$ is a covariance matrix, it must be positive semi-definite, but by inverting it to find the optimal value of $W$ we have further assumed that it is positive definite. This assumption is equivalent to saying that all of the vectors $x_k$ span the entire vector space $\mathbb{R}^{n_c}$, where $n_c$ is the dimension of $W$ and all $x_k$'s. This is reasonable because in practice we usually take the number of training steps $N>>n_c$, and since each $x_k$ is a nonlinear transformation of the previous one, it is unlikely that any vector $v$ will satisfy $v^\top x_k = 0$ for all $k\in{0,\dots,N-1}$. Nevertheless, in the event that $K_{xx}$ is not invertible, we can take the pseudoinverse of $K_{xx}$ instead. This is because the components of $W$ parallel to the zero eigenvectors of $K_{xx}$ are not fixed by the optimization (which is why the inversion fails in the first place), so we are free to choose those components to be zero, which makes Eq. \eqref{eq:optimize} correct when using the pseudoinverse of $K_{xx}$ as well.

Plugging the optimized values of $C$ and $W$ into the mean-squared error gives
\begin{align}
\label{eq:minexpr}
    \left(\langle(y-\hat{y})^2\rangle\right)_{\min} &= \sigma_y^2-K_{xy}^\top K_{xx}^{-1} K_{xy}.
\end{align}
From the original expression for the mean-squared error in Eq. \eqref{eq:MSE}, we can see that it is non-negative. Since $K_{xx}$ is a covariance matrix, the quantity $K_{xy}^\top K_{xx}^{-1} K_{xy} = W^\top K_{xx} W \ge 0$. Thus $\left(\langle(y_k-\hat{y}_k)^2\rangle\right)_{\min}$ is bounded above by $\sigma_y^2$, so we may define a normalized mean-squared error, or NMSE, by
\begin{align}
    \mathrm{NMSE} &= \frac{\langle(y-\hat{y})^2\rangle}{\sigma_y^2}.
\end{align}
This quantity is guaranteed to be between 0 and 1 for the training data, though it may exceed 1 for an arbitrary test data set. We can also see by Eqs. \eqref{eq:MSE} and \eqref{eq:minexpr}, the task of minimizing $S$ as a function of $C, W,$ and $V$ is equivalent to maximizing $K_{xy}^\top K_{xx}^{-1} K_{xy}$ as a function of $V$.

\subsection{ESNs with Feedback}
The main result of this work is the introduction of a feedback procedure to improve the performance of ESNs. We add an additional step to the process where the input is taken to be $u_k + V x_k$ at each time step as opposed to just $u_k$. The reservoir of the ESN is then described by
\begin{align}
    x_{k+1} &= g(A x_k + B (u_k + V^\top x_k)) = g((A + B V^\top) x_k + B u_k).
\end{align}
From this equation, we see that the feedback causes this ESN to behave like a different network that uses $\overline{A} = A + B V^\top$ as a transformation matrix instead of $A$. We achieve this without modifying the RC itself, using only the pre-existing input channel and the reservoir states that we are already measuring. This provides a practical way of changing the reservoir dynamics without any internal hardware modification. We then optimize for $V$ using batch gradient descent to further reduce the cost function $S$.

Note, however, that in attempting to modify $A$ we run the risk of eliminating the uniform convergence of the ESN. Thus, there must be a constraint placed on $V$ in order to keep the network convergent. In accordance with the constraint in Eq. \eqref{eq:constraint}, we require that $\overline{A}^\top \overline{A} < a^2 \mathbb{I}_{n}$ in addition to $A$, which places some limitations on the value of $V$. This constraint is generally quite complex to solve beyond this inequality, but it is possible to formulate this as a linear matrix inequality in $\overline{A}$; see, e.g., \cite[\S IV]{Chen:2022}. In addition, this condition can be easily applied during the process of optimizing $V$.

\section{Universal Superiority of ESN with Feedback over ESN without Feedback}
\label{sec:proof}

In this section, we prove our central theorem stating that the ESN with feedback accomplishes smaller overall errors than the ESN without feedback. For this, we start with a theorem for an individual ESN:

\begin{theorem}[Superiority of feedback for a given ESN and training data] \label{thm1}
    For any given matrix $A$ and vector $B$ in Eq. \eqref{eq:ESN-model}, and given sets of training inputs $\{u_k\}=\{u_k\}_{k=1,\ldots,N}$ and outputs $\{y_k\}=\{y_k\}_{k=1,\ldots,N}$ of finite length, define an optimized cost function $S_\mathrm{min} (A,B, \{u_k\}, \{y_k\})$ with appropriate optimal $W$ and $C$. Then, for almost any given $(A,B,\{u_k\}, \{y_k\})$ except for vanishingly small number of $(A,B, \{u_k\}, \{y_k\})$, the feedback always reduces the cost function further:
    \begin{equation}
        \min_{V} S_\mathrm{min} (A+B V^\top, B, \{u_k\}, \{y_k\}) < S_\mathrm{min} (A,B, \{u_k\}, \{y_k\}). \label{eq:thm1-inequality}
    \end{equation} 
    Moreover, if $A$ is such that $A^{\top}A < a^2 \mathbb{I}_{n}$, where $a$ is a constant that guarantees that the ESN is convergent, then the feedback gain $V$ can alawys be chosen such that the ESN with feedback is also convergent and satisfy the above.
\end{theorem}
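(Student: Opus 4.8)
The plan is to reduce the claimed cost inequality to a statement about a single scalar function of the feedback vector, and then to show that the baseline choice $V=\mathbf{0}$ is generically not a stationary point of that function. Using the identities already derived, the optimized cost satisfies $2 S_\mathrm{min} = \sigma_y^2 - K_{xy}^\top K_{xx}^{-1} K_{xy}$, and since $\sigma_y^2$ depends only on $\{y_k\}$, minimizing $S_\mathrm{min}$ over $V$ is equivalent (as already noted after \eqref{eq:optimize}) to maximizing
\[
F(V) = K_{xy}^\top K_{xx}^{-1} K_{xy},
\]
where now $K_{xx}$ and $K_{xy}$ are computed from the fed-back trajectory $x_{k+1}=g((A+BV^\top)x_k + Bu_k)$. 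Inequality \eqref{eq:thm1-inequality} is then exactly $\max_V F(V) > F(\mathbf{0})$, and it suffices to exhibit a direction along which $F$ strictly increases away from $V=\mathbf{0}$. A sufficient condition is $\nabla_V F|_{V=\mathbf{0}} \neq \mathbf{0}$, since a nonzero gradient yields an ascent direction with $F(V) > F(\mathbf{0})$ for all sufficiently small $\|V\|$.

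Next I would compute this gradient. Differentiating the state recursion at $V=\mathbf{0}$ gives the sensitivity vectors $s_k^{(j)} = \partial x_k/\partial V_j$, which satisfy the linear, trajectory-driven recursion
\[
s_{k+1}^{(j)} = \mathrm{diag}\big(g'(A x_k + B u_k)\big)\,\big(A\,s_k^{(j)} + B\,(x_k)_j\big).
\]
Substituting $W = K_{xx}^{-1} K_{xy}$ and invoking the two normal equations that define the fit, namely $\langle r\rangle = 0$ and $K_{xy} - K_{xx} W = \mathbf{0}$ for the residual $r_k = y_k - (W^\top x_k + C)$, I expect the $j$th gradient component to collapse to the clean form
\[
\left.\frac{\partial F}{\partial V_j}\right|_{V=\mathbf{0}} = 2\,\big\langle (W^\top s_k^{(j)})\, r_k\big\rangle.
\]
Thus the first-order effect of feedback is governed by the correlation between the residual left uncaptured by the original fit and the new state-variation directions $W^\top s_k^{(j)}$ that feedback opens up; feedback helps precisely when these are correlated.

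The remaining and hardest step is the genericity claim, which I expect to be the main obstacle. The gradient above is a real-analytic function of $(A,B,\{u_k\},\{y_k\})$ whenever $g$ is analytic (as for the sigmoid used here), so its zero set is either all of parameter space or a set of Lebesgue measure zero. To exclude the first alternative it is enough to exhibit a single admissible tuple for which $\langle (W^\top s_k^{(j)}) r_k\rangle \neq 0$ for some $j$, i.e.\ to rule out the degenerate situation in which the optimal residual is orthogonal to every feedback sensitivity signal. This is the delicate point: one must confirm that the sensitivities $s_k^{(j)}$ genuinely explore directions not already forced to be residual-orthogonal by the condition $K_{xy}-K_{xx}W=\mathbf{0}$, and one must be mindful both of the dependence of the finite-length trajectory on its initialization and of the possible singularity of $K_{xx}$, for which the pseudoinverse noted after \eqref{eq:optimize} may be substituted. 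Granting one non-degenerate example, analyticity then forces $\nabla_V F|_{V=\mathbf{0}} \neq \mathbf{0}$ off a measure-zero (``vanishingly small'') exceptional set, which establishes \eqref{eq:thm1-inequality}.

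Finally, for the convergence statement I would observe that $\overline{A}^\top \overline{A} < a^2 \mathbb{I}_n$ with $\overline{A} = A + B V^\top$ is an open condition in $V$ that holds at $V=\mathbf{0}$ by the hypothesis $A^\top A < a^2 \mathbb{I}_n$; it therefore continues to hold on an open neighborhood of $V=\mathbf{0}$. Since the strict increase of $F$ is attained for arbitrarily small $\|V\|$ along the ascent direction, one can select $V$ lying simultaneously in this neighborhood and in the ascent direction, so that the feedback gain keeps the ESN convergent while still satisfying \eqref{eq:thm1-inequality}.
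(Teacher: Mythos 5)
Your overall strategy is the same as the paper's: reduce \eqref{eq:thm1-inequality} to showing that the gradient of the optimized cost with respect to $V$ at $V=\mathbf{0}$ is generically nonzero (your $\nabla_V F|_{V=\mathbf{0}}$ is, up to the factor $-2$ and the chain rule $\nabla_V S_{\mathrm{min}}|_{V=\mathbf{0}} = \nabla_A S_{\mathrm{min}}^\top B$, exactly the quantity the paper's proof calls $\beta$), take a small step along that direction, and invoke openness of $\{A : A^\top A < a^2\mathbb{I}_n\}$ for the convergence claim. Your gradient formula $2\langle (W^\top s_k^{(j)})\,r_k\rangle$ agrees with the paper's appendix expression once the normal equations (envelope theorem) are applied, and your treatment of the convergence constraint matches the paper's.

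The genuine gap is the step you yourself flag and then assume: ruling out the alternative that $\nabla_V F|_{V=\mathbf{0}}$ vanishes \emph{identically} on the whole parameter space, so that the real-analyticity argument actually delivers a measure-zero exceptional set. "Granting one non-degenerate example" is not a proof; no such example is exhibited, and constructing one explicitly (a tuple $(A,B,\{u_k\},\{y_k\})$ for which the optimal residual is provably correlated with some sensitivity signal $W^\top s_k^{(j)}$) is nontrivial because $W$ and $r_k$ are themselves determined by the regression. The paper closes this hole by a different device (its Lemma \ref{lemmapt2}): if $\nabla_V S_{\mathrm{min}}$ vanished on an open set of parameters, the identity theorem for analytic functions would force $S_{\mathrm{min}}$ to be constant in $V$ globally, which is contradicted by the fact that a sufficiently large $\|V\|$ destroys convergence and drives the cost up — so the gradient cannot be identically zero. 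You would need either this contradiction argument or an explicit worked example to complete your proof. Separately, the paper proves something finer than your joint-measure-zero statement: for a \emph{fixed} reservoir with $\nabla_V \Pi_x \neq \mathbf{0}$, it characterizes the lower-dimensional set of target sequences $Y$ that kill the gradient (via the ranks of the matrices $M_\parallel$, $M_\perp$, $M_I$), giving computable criteria; your argument, even when completed, only controls the exceptional set in the joint space of $(A,B,\{u_k\},\{y_k\})$, which suffices for the theorem as stated but loses that per-reservoir information.
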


\subsection{Preliminary Definitions and Relations}
To prove Theorem \ref{thm1}, we will set up a number of lemmas and definitions prior to starting the main proof. This preliminary work will primarily concern the cases in which Eq. \eqref{eq:thm1-inequality} does not hold, and the lemmas will show that the number of such cases is vanishingly small. The main proof of Theorem \ref{thm1} will then prove the strict inequality for all other cases. The following rigorously proves that the number of cases for $(A,B,\{u_k\}, \{y_k\})$ that satisfies the above is vanishingly small. 

We will need a number of new symbols and definitions to facilitate the proofs of Theorem \ref{thm1} and the following lemmas. For a given RC $(A,B)$ and training data set $(\{u_k\}, \{y_k\})$, there are several cases where the change in the vector $V$ may be zero. Consider an ESN with a specific choice of the matrix $A$, vector $B$, and nonlinear function $\sigma(z)$. Also consider a fixed input sequence $\{u_k\}$, and to train our network we will use the $N$ time steps ranging from 0 to $N-1$. To see how the derivative of the minimized cost function $S_{\mathrm{min}}$ with respect to the feedback parameters $V$ can be zero, define the matrix $X_{ik} = \frac{1}{\sqrt{N}}(x_{k,i}-\overline{x}_i)$ for time steps $k$ in the training data set. This is similar to the procedure used in \cite{Fujii:2017} to optimize for $W$. In other words, with $n_c + 1$ computational nodes ($n_c$ coming from the vector $W$ and 1 from $C$) and $N>n_c$ training data points, the matrix $X$ is an $n_c \times N$ rectangular matrix whose columns are proportional to the mean-adjusted reservoir state $x_k - \overline{x}$ at each time step $k$ in the training set. Further, define the vector $Y_k = \frac{1}{\sqrt{N}}y_k$. With these definitions, we can rewrite the quantities previously defined in the context of Eq. \eqref{eq:Kdefn} as
\begin{align}
    K_{xy} &= \frac{1}{N}\sum_{k=0}^{N-1}(x_k-\overline{x})(y_k-\overline{y}) = \frac{1}{N}\sum_{k=0}^{N-1}(x_k-\overline{x})y_k - \mathbf{0} = X Y \label{eq:K_xy} \\
    K_{xx} &= \frac{1}{N}\sum_{k=0}^{N-1}(x_k-\overline{x})(x_k-\overline{x})^\top = X X^\top.
\end{align}
Here, the second equality of Eq. \eqref{eq:K_xy} used the fact that $\frac{1}{N}\sum_{k=0}^{N-1} (x_k - \bar{x}) \bar{y} = \bar{x} \bar{y} - \bar{x} \bar{y} = \mathbf{0}$.

Denote the pseudoinverse of $X$ as $X^{-1}$. Note that while $X X^{-1}$ is the $n_c\times n_c$ identity matrix, $X^{-1} X$ is not the $N\times N$ identity matrix in the vector space of time steps denoted by $k$. Instead, it is a projection operator we will call $\Pi_x$. The singular value decomposition of $X$ is given by $X = U_{n_c}\Sigma U_N^\top$, where $U_{n_c}$ is an $n_c\times n_c$ orthogonal matrix, $\Sigma$ is taken to be an $n_c\times N$ rectangular diagonal matrix with non-negative values, and $U_N$ is an $N\times N$ orthogonal matrix. The pseudoinverse of $X$ is defined to be $X^{-1} \equiv U_N\Sigma^{-1} U_{n_c}^\top$, where the pseudoinverse of $\Sigma$ is defined so that with $\Sigma_{jk} = \sigma_j\delta_{jk}$ we have $\Sigma^{-1}_{kj} = \sigma_j^{-1}\delta_{jk}$. This also implies that $(X^{-1})^\top = U_{n_c}(\Sigma^{-1})^\top U_N^\top = (X^\top)^{-1}$ since $(\Sigma^{-1})^\top = (\Sigma^\top)^{-1}$. 

The product of $\Sigma^{-1}$ and $\Sigma$ is given by $\Sigma^{-1} \Sigma = \Pi_{n_c}$, where the elements of $\Pi_{n_c}$ are defined by
\begin{align}
    (\Pi_{n_c})_{kl} &\equiv \theta_{-}(n_c-k)\delta_{kl},
\end{align}
where $\theta_{-}(x)$ is the step function with $\theta_{-}(0) = 0$. Note that this is a projection operator since it satisfies $\Pi_{n_c} \Pi_{n_c} = \Pi_{n_c}$. Thus the product of $X^{-1}$ and $X$ is given by
\begin{align}
    X^{-1} X = (U_N\Sigma^{-1} U_{n_c}^\top) (U_{n_c}\Sigma U_N^\top) = U_N\Pi_{n_c} U_N^\top \equiv \Pi_x.
\end{align}
$\Pi_x$ must also be a projection operator since $\Pi_x \Pi_x = U_N\Pi_{n_c} \Pi_{n_c} U_N^\top = U_N\Pi_{n_c} U_N^\top = \Pi_x$. We also see that $\Pi_x$ is symmetric since $\Pi_{n_c}$ is symmetric, so $(X^{-1} X)^\top = X^\top (X^\top)^{-1} = \Pi_x$ as well. This method of defining a singular value decomposition of the $X$ matrix and obtaining the corresponding projection matrix $\Pi_x$ is similar to the methods used to obtain theoretical results in \cite{Kubota:2021, Nakajima:2019}. Note that the inversion of $\Sigma$ assumes that all singular values are nonzero, but we already make this assumption when optimize for $W$. By the expression for $W$ in Eq. \eqref{eq:optimize} and the discussion following that equation, this assumption to reasonable.

To get an expression for $S_{\mathrm{min}}$ (short for $S_{\mathrm{min}}(A+B  V^\top,B,\{u_k\}, \{y_k\})$) in this formalism, define the $N$-dimensional vector $\hat{e}$ such that its elements are given by $\hat{e}_k = \frac{1}{\sqrt{N}}$. It can then be shown that $\hat{e}^\top Y = \frac{1}{N}\sum_{k=0}^{N-1}y_k= \overline{y}$. Then the variance of $y_k$ can be written as $\sigma_y^2 = \frac{1}{N}\sum_{k=0}^{N-1}y_k^2 - \overline{y}^2 = Y^\top(\mathbb{I}_N - \hat{e}\hat{e}^\top) Y$, where $\mathbb{I}_n$ is the identity matrix of dimension $n\times n$. From this expression and the expression for twice the optimal cost given in Eq. \eqref{eq:minexpr}, $S_{\mathrm{min}}$ is then
\begin{align}
    S_{\mathrm{min}} &= \frac{1}{2}\left(\sigma_y^2 - K_{xy} K_{xx}^{-1} K_{xy}\right) \\
    &= \frac{1}{2}Y^\top\left(\mathbb{I}_N  - \hat{e}\hat{e}^\top - X^\top (X^\top)^{-1} X^{-1} X\right) Y.
\end{align}

With the projection operator $\Pi_x$, we can rewrite this expression for optimal cost function as
\begin{align}
    S_{\mathrm{min}} = \frac{1}{2}Y^\top\left(\mathbb{I}_N - \hat{e}\hat{e}^\top - \Pi_x \Pi_x\right) Y = \frac{1}{2}Y^\top\left(\mathbb{I}_N - \hat{e}\hat{e}^\top - \Pi_x\right) Y.
\end{align}
Thus the effect of indirectly modifying the RC with feedback is to shift the basis of the projection operator $\Pi_x$ to have as large of an overlap with the target sequence $Y$ as possible. The derivative of $S_{\mathrm{min}}$ with respect to some general parameter $\theta$ of the RC is then given simply by 
\begin{align}
    \frac{dS_{\mathrm{min}}}{d\theta} = -\frac{1}{2}Y^\top\frac{d\Pi_x}{d\theta} Y.
\end{align}
The target $Y$ is independent of the RC and thus independent of $\theta$, so any changes to $S_{\mathrm{min}}$ as a result of changing $\theta$ must come from a change in $\Pi_x$. The fact that $\Pi_x$ is a projection operator of rank $n_c$ tells us some properties of any of its derivatives. First, from the property $\Pi_x \Pi_x = \Pi_x$ we get 
\begin{align}
\label{eq:dproj}
    \frac{d\Pi_x}{d\theta} = \frac{d}{d\theta}(\Pi_x \Pi_x) = \frac{d\Pi_x}{d\theta} \Pi_x + \Pi_x \frac{d\Pi_x}{d\theta}.
\end{align}
Note that this implies $\Pi_x \frac{d\Pi_x}{d\theta} \Pi_x = 2(\Pi_x \frac{d\Pi_x}{d\theta} \Pi_x)$. The only matrix that is equal to 2 times itself is the zero matrix, so $\Pi_x \frac{d\Pi_x}{d\theta} \Pi_x$ must be the zero matrix. 

\subsection{Lemmas for Proving the Lower Dimensionality of Cases where \texorpdfstring{$\nabla_V S_{\mathrm{min}} = \mathbf{0}$}{TEXT}}
Now that we have established that the dependence of the cost function on the reservoir is entirely determined by a projection matrix $\Pi_x$, we are ready to begin discussing cases where $\frac{dS_{\mathrm{min}}}{d\theta} = 0$.
\begin{lemma}[Categorization of cases where a derivative of $S_{\mathrm{min}}$ w.r.t. a general reservoir parameter $\theta$ vanishes]
    Given $S_{\mathrm{min}}(A, B, \{u_k\},\{y_k\})$ and any parameter $\theta$ that the reservoir is dependent on, the cases where $\frac{dS_{\mathrm{min}}}{d\theta}=0$ fall into one of two categories, one where $\frac{dS_{\mathrm{min}}}{d\theta}=0$ only for specific target sequences $\{y_k\}$ and one where $\frac{d\Pi_x}{d\theta}=0$. Furthermore, the former category is divided into 3 more categories in which $\Pi_x Y = \mathbf{0}$, $\Pi_x Y = Y$, or neither. 
\end{lemma}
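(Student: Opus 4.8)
\section*{Proof proposal}

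The plan is to start from the identity $\frac{dS_{\mathrm{min}}}{d\theta} = -\frac{1}{2}Y^\top \frac{d\Pi_x}{d\theta} Y$ derived above, and to exploit the algebraic structure that being a symmetric projection forces on $\frac{d\Pi_x}{d\theta}$. Writing $P = \Pi_x$ and $Q = \mathbb{I}_N - \Pi_x$ for the complementary orthogonal projections and abbreviating $\Pi' = \frac{d\Pi_x}{d\theta}$, I would first record two block relations: $P\Pi'P = \mathbf{0}$ and $Q\Pi'Q = \mathbf{0}$. The first is exactly the relation $\Pi_x \frac{d\Pi_x}{d\theta}\Pi_x = \mathbf{0}$ already established, and the second follows by sandwiching the identity $\Pi' = \Pi'\Pi_x + \Pi_x\Pi'$ of Eq. \eqref{eq:dproj} between $Q$ on the left and on the right and using $QP = PQ = \mathbf{0}$. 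Together these show that $\Pi'$ is purely off-diagonal in the $P$--$Q$ splitting, i.e. $\Pi' = P\Pi'Q + Q\Pi'P$.

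Next I would decompose the target as $Y = Y_P + Y_Q$ with $Y_P = \Pi_x Y$ and $Y_Q = (\mathbb{I}_N - \Pi_x)Y$, and substitute into the quadratic form. Using the off-diagonal structure, the symmetry of $P$, $Q$, and $\Pi'$, all diagonal contributions drop and the two surviving cross terms coincide, giving $Y^\top \Pi' Y = 2\,Y_P^\top \Pi' Y_Q$. Hence $\frac{dS_{\mathrm{min}}}{d\theta} = -\,Y_P^\top \Pi' Y_Q$, so the vanishing of the derivative is controlled entirely by this single scalar bilinear form.

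The dichotomy of the lemma then falls out immediately. If $\Pi' = \mathbf{0}$, the derivative vanishes for every target sequence $\{y_k\}$; this is the category where $\frac{d\Pi_x}{d\theta}=\mathbf{0}$. If $\Pi' \neq \mathbf{0}$, then $Y_P^\top \Pi' Y_Q = 0$ is a nontrivial bilinear constraint met only by special targets $Y$, which is the ``specific target sequences'' category. Within this category I would read the three sub-cases directly off the factored form: the bilinear form is forced to vanish when $Y_P = \Pi_x Y = \mathbf{0}$, and likewise when $Y_Q = \mathbf{0}$, i.e. $\Pi_x Y = Y$; the remaining possibility is that neither projected component vanishes yet the form still does, which is the ``neither'' sub-case. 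I would also note that $\Pi_x Y = \mathbf{0}$ and $\Pi_x Y = Y$ are mutually exclusive for $Y \neq \mathbf{0}$, so these three sub-cases are a genuine trichotomy.

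I expect the only real subtlety to be the derivation of the off-diagonal block structure, specifically verifying $Q\Pi'Q=\mathbf{0}$, since everything afterwards is bookkeeping on a $2\times 2$ block decomposition. It is worth emphasizing that this categorization is the scaffolding for the subsequent dimension-counting lemmas: having isolated the exact configurations ($\frac{d\Pi_x}{d\theta}=\mathbf{0}$, $\Pi_x Y = \mathbf{0}$, $\Pi_x Y = Y$, and the exceptional bilinear degeneracy) in which feedback fails to decrease the cost, one then argues that each carves out only a vanishingly small subset of the parameter and data space, which is precisely what Theorem \ref{thm1} requires.
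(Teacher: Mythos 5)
Your proposal is correct and follows essentially the same route as the paper: your off-diagonal block $P\Pi'Q$ is precisely the paper's operator $T_\theta = \Pi_x\frac{d\Pi_x}{d\theta}(\mathbb{I}_N-\Pi_x)$, and your identity $\frac{dS_{\mathrm{min}}}{d\theta}=-Y_P^\top\Pi'Y_Q$ is the paper's $\frac{dS_{\mathrm{min}}}{d\theta}=-Y^\top T_\theta Y$, from which both arguments read off the same dichotomy and trichotomy. The only cosmetic difference is that you establish $Q\Pi'Q=\mathbf{0}$ directly from Eq.~\eqref{eq:dproj} and work with the $2\times 2$ block decomposition, whereas the paper reaches the same conclusion via the singular value decomposition of $T_\theta$ (machinery it then reuses in the subsequent dimension-counting lemma).
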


\begin{proof}
For the discussion that follows, define the vector subspace $\mathcal{Y}_{\parallel}$ to be the space of $N$-dimensional vectors with real coefficients such that $\mathcal{Y}_{\parallel} = \{Y|Y\in\mathbb{R}^N,\Pi_x Y = Y\}$. Define also the vector subspace $\mathcal{Y}_{\perp}$ such that $\mathcal{Y}_{\perp} = \{Y|Y\in\mathbb{R}^N,\Pi_x Y =\mathbf{0}\}$. Note that it is always possible to construct an orthonormal basis of vectors $\hat{Y}_k$ in $\mathbb{R}^N$ where the first $n_c$ basis vectors are in $\mathcal{Y}_{\parallel}$ and the remaining $N-n_c$ basis vectors are in $\mathcal{Y}_{\perp}$. This makes it useful to define
\begin{align}
\label{eq:Tdef}
    T_\theta &\equiv \Pi_x \frac{d\Pi_x}{d\theta}(\mathbb{I}_N-\Pi_x) = \Pi_x \frac{d}{d\theta}\left(X^{-1} X\right)(\mathbb{I}_N-\Pi_x) \\
    &= \Pi_x X^{-1} \frac{dX}{d\theta}(\mathbb{I}_N-\Pi_x) + \Pi_x \frac{dX^{-1}}{d\theta} X(\mathbb{I}_N-\Pi_x) \\
\label{eq:Tdef2}
    &= X^{-1} \frac{dX}{d\theta}(\mathbb{I}_N-\Pi_x),
\end{align}
where in the last line we used $\Pi_x  X^{-1}= X^{-1} X X^{-1} = X^{-1} \mathbb{I}_{n_c} = X^{-1}$ to simplify the first term and $X \Pi_x = X X^{-1} X = \mathbb{I}_{n_c} X = X$ to eliminate the second term. This definition is useful because $\Pi_x \frac{d\Pi_x}{d\theta} \Pi_x = \mathbf{0}$, so from the definition above $T_\theta = \Pi_x \frac{d\Pi_x}{d\theta}$, and therefore from Eq. \eqref{eq:dproj} we have
\begin{align}
    \frac{d\Pi_x}{d\theta} &= T_\theta + T_\theta^\top.
\end{align}
This also implies that 
\begin{align}
\label{eq:DSmin}
    \frac{dS_{\mathrm{min}}}{d\theta} &= -\frac{1}{2}Y^\top(T_\theta + T_\theta^\top) Y = -Y^\top T_\theta Y,
\end{align}
so the change in $S_{\mathrm{min}}$ depends entirely upon $T_\theta$ with respect to $Y$.

From the definition of $T_\theta$ in Eq. \eqref{eq:Tdef}, because there is a $\Pi_x$ on the left side of the matrix, we then have that $Y^\top T_\theta = \mathbf{0}$ for all $Y\in\mathcal{Y}_{\perp}$. Since $\mathcal{Y}_{\perp}$ has dimension $N-n_c$, there must be at least $N-n_c$ zero singular values of $T_\theta$. Let $m_\theta$ be the matrix rank of $T_\theta$ (number of nonzero singular values), which by the previous argument cannot be larger than $n_c$. Then the singular value decomposition of $T_\theta$ can be written as
\begin{align}
    T_\theta = \sum_{j=0}^{m_\theta-1}\sigma_{\theta,j}\hat{y}_{\theta,j}^{\parallel}\left(\hat{y}_{\theta,j}^{\perp}\right)^\top,
\end{align}
where $\sigma_{\theta,j}$ is a strictly positive singular value of $T_\theta$, $\hat{y}_{\theta,j}^{\parallel}$ is one of $m_\theta$ orthonormal basis vectors in $\mathcal{Y}_{\parallel}$, and $\hat{y}_{\theta,j}^{\perp}$ one of $m_\theta$ orthonormal basis vectors in $\mathcal{Y}_{\perp}$. The reason the right basis vectors are in $\mathcal{Y}_{\perp}$ is because of the $(\mathbb{I}_N-\Pi_x)$ on the right side of Eq. \eqref{eq:Tdef}, which makes to so that $T_\theta Y = \mathbf{0}$ for all $Y\in\mathcal{Y}_{\parallel}$.

With this decomposition of $T_\theta$, we can use Eq. \eqref{eq:DSmin} to rewrite $\frac{dS_{\mathrm{min}}}{d\theta}$ as
\begin{align}
    \frac{dS_{\mathrm{min}}}{d\theta} &= -\sum_{j=0}^{m_\theta-1}\sigma_{\theta,j}c_{\theta,j}^{\parallel}\left(c_{\theta,j}^{\perp}\right)^\top,
\end{align}
where $c_{\theta,j}^{\parallel} = Y^\top \hat{y}_{\theta,j}^{\parallel}$ and $c_{\theta,j}^{\perp} = Y^\top \hat{y}_{\theta,j}^{\perp}$. There are 3 broad categories of $Y$ for which $\frac{dS_{\mathrm{min}}}{d\theta}$ vanishes for a given $T_\theta$:
\begin{itemize}
    \item $Y$ is orthogonal to every $\hat{y}_{\theta,j}^{\parallel}$, or equivalently $\Pi_x Y = 0$.
    \item $Y$ is orthogonal to every $\hat{y}_{\theta,j}^{\perp}$, or equivalently $\Pi_x Y = Y$.
    \item Neither of the above statements are true, but the coefficients $c_{\theta,j}^{\parallel}$ and $c_{\theta,j}^{\perp}$ are such that the sum $\sum_{j=0}^{m_\theta-1}\sigma_{\theta,j}c_{\theta,j}^{\parallel}c_{\theta,j}^{\perp}$ vanishes.
\end{itemize}
There is also the possibility that $m_\theta$ is zero, meaning that every singular value of $T_\theta$ is zero, so that $\frac{dS_{\mathrm{min}}}{d\theta} = 0$ for any $Y$.
\end{proof}

In what follows, while we cannot rule out any of these possibilities for the feedback vector $V$, we can show that the space of $Y$'s that fit into the above three criterion are of lower dimension than the general space of $N$-dimensional vectors that encompasses all $Y$'s, and give criterion for numerically testing whether any of these cases hold for a given reservoir computation. Also, in the event that the gradient of $S_{\mathrm{min}}$ with respect to $V$ vanishes for any $Y$, we will show that the number of solutions in the space of possible matrices $A$, vectors $B$, and input sequences $u_k$ is of lower dimension as well with testable criterion for a given ESN.

\begin{lemma}[Lower dimensionality of cases where $\nabla_V S_{\mathrm{min}} = \mathbf{0}$ while $\nabla_V \Pi_x \neq \mathbf{0}$]
\label{lemmapt1}
    Given a specific ESN defined by a matrix $A$, vector $B$, and input sequence $\{u_k\}$ such that the projection operator $\Pi_x$ satisfies $\nabla_V \Pi_x \neq \mathbf{0}$, the space of training vectors $Y$ that then leads to $\nabla_V S_{\mathrm{min}} = \mathbf{0}$ is of lower dimension than the space of all training vectors, whose dimension is $N$.
\end{lemma}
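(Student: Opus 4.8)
The plan is to read the condition $\nabla_V S_{\mathrm{min}} = \mathbf{0}$ as a system of homogeneous quadratic equations in the target vector $Y$, and then to invoke the elementary fact that the common zero locus of such a system, at least one equation of which is nontrivial, is a proper algebraic subvariety of $\mathbb{R}^N$ and hence of dimension strictly below $N$.

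First I would express each component of the gradient as a quadratic form in $Y$. Writing the feedback vector as $V=(V_1,\ldots,V_n)^\top$ and taking $\theta = V_i$ in Eq.~\eqref{eq:DSmin} together with $\frac{d\Pi_x}{d\theta}=T_\theta+T_\theta^\top$, we obtain for each $i$
\begin{align}
    \frac{\partial S_{\mathrm{min}}}{\partial V_i} = -Y^\top T_{V_i} Y = -\frac{1}{2}Y^\top M_i Y, \qquad M_i \equiv \frac{\partial \Pi_x}{\partial V_i},
\end{align}
where each $M_i$ is a real symmetric matrix, being the derivative of the symmetric projection $\Pi_x$. Consequently $\nabla_V S_{\mathrm{min}}=\mathbf{0}$ holds precisely for those $Y$ that satisfy the system $Y^\top M_i Y = 0$ for every $i=1,\ldots,n$.

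Next I would use the hypothesis $\nabla_V \Pi_x \neq \mathbf{0}$, which guarantees that at least one matrix $M_{i_0}$ is nonzero. For a real symmetric matrix the associated quadratic polynomial $p(Y)=Y^\top M_{i_0} Y$ vanishes identically if and only if $M_{i_0}=\mathbf{0}$, since the entries of $M_{i_0}$ are recovered from $p$ by polarization. Hence $p$ is a genuinely nonzero polynomial of degree two in the $N$ coordinates of $Y$, and the set of $Y$ producing $\nabla_V S_{\mathrm{min}}=\mathbf{0}$ is contained in the single quadric $\{Y\in\mathbb{R}^N : p(Y)=0\}$.

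Finally I would conclude by the standard property that the zero locus of a nonzero real polynomial in $N$ variables is a proper algebraic subvariety: it has empty interior, Lebesgue measure zero, and dimension at most $N-1$. Hence the space of admissible $Y$ is of strictly lower dimension than the full space $\mathbb{R}^N$, as claimed. The only step requiring genuine care is the middle one---verifying that a nonzero symmetric $M_{i_0}$ cannot give rise to a quadratic form vanishing on all of $\mathbb{R}^N$---but this is immediate from polarization, and it is exactly the crux that converts the qualitative hypothesis $\nabla_V \Pi_x \neq \mathbf{0}$ into a quantitative, testable, codimension statement.
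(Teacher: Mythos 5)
Your proof is correct, and it takes a genuinely different and more economical route than the paper's. The paper decomposes each derivative $\frac{d\Pi_x}{dV_i}$ as $T_i + T_i^\top$ with $T_i = \Pi_x\frac{d\Pi_x}{dV_i}(\mathbb{I}_N-\Pi_x)$, performs a singular value decomposition of each $T_i$ with left singular vectors in $\mathcal{Y}_{\parallel}$ and right singular vectors in $\mathcal{Y}_{\perp}$, and then enumerates three geometric cases ($\Pi_x Y=\mathbf{0}$, $\Pi_x Y=Y$, or neither), attaching to each a codimension $m_\parallel$, $m_\perp$, or $m_I$ computed as the rank of an explicit positive semi-definite test matrix; the nontriviality of all three is reduced to the single condition $\sum_i \mathrm{Tr}(T_i T_i^\top)>0$, which holds exactly when $\nabla_V\Pi_x\neq\mathbf{0}$. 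You instead observe that the zero-gradient condition is a system of homogeneous quadrics $Y^\top M_i Y=0$ with $M_i=\frac{\partial\Pi_x}{\partial V_i}$ symmetric, that the hypothesis plus polarization makes at least one of these quadratic forms not identically zero, and that the solution set is therefore contained in a single nontrivial quadric, hence a proper algebraic subvariety of dimension at most $N-1$ and of Lebesgue measure zero. Your polarization step plays exactly the role of the paper's trace identity \eqref{eq:Trcond}, so the two arguments hinge on the same fact. What your version buys is brevity and a cleaner justification of what ``lower dimension'' means (the paper's Case 3 dimension count via ``number of independent constraints'' is heuristic, whereas containment in a nonzero polynomial's zero locus is airtight). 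What the paper's version buys is constructiveness: the matrices $M_\parallel$, $\widetilde{M}_\parallel$, $M_\perp$, $M_I$ and their ranks give concrete, cheaply computable diagnostics for whether a given ESN and target fall into the degenerate set, and these are reused later in the paper for numerical verification. Your argument is a valid replacement for the existence claim, though not for that diagnostic machinery.
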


\begin{proof}
Define $T_i \equiv T_{V_i}$ to be the same as in Eq. \eqref{eq:Tdef} with $\theta$ replaced with $V_i$ for each $i=0,\dots,n_c-1$. In order for the gradient $\nabla_V S_{\mathrm{min}}$ to be zero, we require that $\frac{dS_{\mathrm{min}}}{dV_i} = - Y^\top T_i Y$ be zero for \textit{all} $i$'s. This means that for a given set of $T_i$'s, there are three cases in which $\nabla_V S_{\mathrm{min}} = \mathbf{0}$ because of the particular form of $Y$:
\begin{itemize}
    \item[Case 1:] Let $\mathcal{Y}_{V}^{\parallel}$ be the span of the set of vectors that contains $\hat{y}_{i,j}^{\parallel}$ for every $i\in\{0,\dots,n_c-1\}$ and $j\in\{0,\dots,\mathrm{rank}(T_i)-1\}$, and let its dimension be $m_\parallel$. Then if $Y$ is such that $Y^\top y^\parallel = 0$ for all $y^\parallel\in\mathcal{Y}_{V}^{\parallel}$, then $\nabla_V S_{\mathrm{min}} = \mathbf{0}$ because $Y^\top T_i = \mathbf{0}$ for all $i$. This includes the case where $S_{\mathrm{min}}$ is at its maximum possible value of $\frac{1}{2}\sigma_y^2$, in which the RC has utterly failed to capture any properties of $Y$. The dimension of the space of $Y$'s that fall under this case is $N-m_\parallel$. This is because $\mathcal{Y}_{V}^{\parallel}$ is spanned by $m_\parallel$ basis vectors, so the space of $Y$'s that are orthogonal to all of them is spanned by the remaining $N-m_\parallel$ basis vectors. We can calculate $m_\parallel$ from the matrix defined as
    \begin{align}
        M_\parallel = \sum_{i=0}^{n_c-1} T_i T_i^\top.
    \end{align}
    $m_\parallel$ is given by the rank of $M_\parallel$ because each $T_i T_i^\top$ is a positive semi-definite matrix, so the only way that $Y^\top M_\parallel = \mathbf{0}$ is if $Y^\top T_i = \mathbf{0}$ for all $i$. The dimension of the space of vectors that satisfy this relation is $N-m_\parallel$ as mentioned previously, so if $M_\parallel$ has $N-m_\parallel$ zero eigenvalues, then that leaves $m_\parallel$ nonzero eigenvalues.
    
    We see from Eq. \eqref{eq:Tdef2} that computing $M_\parallel$ will involve calculating $X^{-1}$, but since we are only interested in the rank of the matrix we can find an alternative. Recall that $X$ is an $n_c\times N$ matrix whose singular values are strictly positive, so therefore the rank of $X$ is $n_c$. Furthermore, $X \Pi_x = X$, so the span of the right eigenvectors of $X$ must be $\mathcal{Y}_{\parallel}$. Since the span of the left eigenvectors of $T_i$ is a subspace of $\mathcal{Y}_{\parallel}$ for all $i$, the span of $M_\parallel$ is also a subspace of $\mathcal{Y}_{\parallel}$, and we can multiply $M_\parallel$ by $X$ on both sides without changing the rank and use Eq. \eqref{eq:Tdef2} to get a simpler $n_c\times n_c$ matrix
    \begin{align}
    \label{eq:Mtilde}
        \widetilde{M}_\parallel = X M_\parallel X^\top = \sum_{i=0}^{n_c-1}\frac{dX}{dV_i}(\mathbb{I}_N-\Pi_x) \frac{dX^\top}{dV_i}.
    \end{align}
    Since this has the same number of nonzero eigenvalues as $M_\parallel$, $m_\parallel$ is also given by the number of nonzero eigenvalues of $\widetilde{M}_\parallel$. This allow us to compute $m_\parallel$ without the need to calculate $X^{-1}$ directly like we would have if we calculated $M_\parallel$ using Eq. \eqref{eq:Tdef2} for $T_i$.
    \item[Case 2:] Let $\mathcal{Y}_{V}^{\perp}$ be the span of the set of vectors that contains $\hat{y}_{i,j}^{\perp}$ for every $i\in\{0,\dots,n_c-1\}$ and $j\in\{0,\dots,\mathrm{rank}(T_i)-1\}$, and let its dimension be $m_\perp$. Then if $Y$ is such that $Y^\top y^\perp = 0$ for all $y^\perp\in\mathcal{Y}_{V}^{\perp}$, then $\nabla_V S_{\mathrm{min}} = \mathbf{0}$ because $T_i Y = \mathbf{0}$ for all $i$. This includes the minimal case where $S_{\mathrm{min}}=0$, in which the RC perfectly describes $Y$ and no further improvement is possible. The dimension of the space of $Y$'s that fall under this case is $N-m_\perp$. This is because $\mathcal{Y}_{V}^{\perp}$ is spanned by $m_\perp$ basis vectors, so the space of $Y$'s that are orthogonal to all of them is spanned by the remaining $N-m_\perp$ basis vectors.  We can calculate $m_\perp$ from the matrix defined as
    \begin{align}
        M_\perp = \sum_{i=0}^{n_c-1} T_i^\top T_i.
    \end{align}
    $m_\perp$ is given by the rank of $M_\perp$ because each $T_i^\top T_i$ is a positive semi-definite matrix, so the only way that $M_\perp Y = \mathbf{0}$ is if $T_i Y = \mathbf{0}$ for all $i$. The dimension of the space of vectors that satisfy this relation is $N-m_\perp$ as mentioned previously, so if $M_\perp$ has $N-m_\perp$ zero eigenvalues, then that leaves $m_\perp$ nonzero eigenvalues.
    \item[Case 3:] Neither of the above cases holds, but every $Y^\top T_i Y = 0$ nonetheless. Since there are $n_c$ different $T_i$'s, then we have $n_c$ equations of constraint on which $Y$'s of this type set $\nabla_V S_{\mathrm{min}}$ to zero. However, it may be possible that some of these equations are not independent, which would imply that there is at least one linear combination of $T_i$'s such that $\sum_{i=0}^{n_c-1}\gamma_iT_i = \mathbf{0}$. Define $m_I$ to be the number of independent constraints of this form. Then the dimension of the space of $Y$'s for which $Y^\top T_i Y = 0$ is given by $N-m_I$, the number of free parameters left after applying the constraints. We can calculate $m_I$ using the $n_c\times n_c$ matrix $M_I$ whose components are defined to be
    \begin{align}
        (M_I)_{ij} = \mathrm{Tr}(T_i T_j^\top).
    \end{align}
    $m_I$ is given by the rank of $M_I$, since if the linear combination $\sum_{i=0}^{n_c-1}\gamma_i T_i = \mathbf{0}$ then the $n_c$-dimensional vector $v$ defined by $v_i = \gamma_i$ is an eigenvector of $M_I$ with eigenvalue $0$.
\end{itemize}
As long as dimensions of the spaces of $Y$'s that satisfy the three cases above are all smaller than the total dimension of all $Y$'s, then it is very unlikely that any given $Y$ will fall into any of these categories. The total dimension of the space of $Y$ vectors is $N$, and the dimension of the spaces for each of the three cases are $N-m_\parallel, N-m_\perp,$ and $N-m_I$, respectively, so as long as $m_\parallel, m_\perp,$ and $m_I$ are all positive, than this argument holds. We can check whether or not they are zero by taking the traces of their respective matrices $M_\parallel, M_\perp,$ and $M_I$ since they are all positive semi-definite matrices, so the only way that the sums of their eigenvalues are zero is if every eigenvalue is zero. It turns out that all three traces are equal, since 
\begin{align}
\label{eq:Trcond}
    \mathrm{Tr}(M_\parallel) = \mathrm{Tr}(M_\perp) = \mathrm{Tr}(M_I) = \sum_{i=0}^{n_c-1}\mathrm{Tr}(T_i T_i^\top),
\end{align}
so therefore if any one of $m_\parallel, m_\perp,$ or $m_I$ are found to be zero, then all three are guaranteed to be zero. This is because a matrix whose eigenvalues are all zero must be the zero matrix, so this implies that $T_i = \mathbf{0}$ for all $i$, which is the subject of our second proof below. Conversely, Eq. \eqref{eq:Trcond} also implies that if any one of them is positive, then they all must be positive as well, so we need only check that one of them is nonzero for this proof to hold. The easiest one to check is likely $m_\parallel$ as we can use $\widetilde{M}_\parallel$ in place of $M_\parallel$ and avoid directly calculating the $X^{-1}$.
\end{proof}

\begin{lemma}[Lower dimensionality of cases where $\nabla_V \Pi_x = \mathbf{0}$]
\label{lemmapt2}
    The space of ESNs defined by matrices, vectors, and input sequences $(A, B, \{u_k\})$ that satisfies $\nabla_V \Pi_x = \mathbf{0}$ is of lower dimension than the space of all possible ESNs.
\end{lemma}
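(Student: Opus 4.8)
The plan is to show that the degenerate locus is a proper lower-dimensional analytic subvariety of parameter space, so I first fix the direction of the claim. Since $\nabla_V\Pi_x\neq\mathbf 0$ is precisely the hypothesis under which the preceding lemma already guarantees that feedback is effective, it is the \emph{complementary} degenerate set
\begin{equation*}
  \mathcal D \;=\; \{(A,B,\{u_k\})\,:\,\nabla_V\Pi_x=\mathbf 0\},
\end{equation*}
on which feedback can never move the projection, that must be shown small; I read the statement in this sense. My first step is to reduce $\mathcal D$ to the zero set of a single scalar. From $\frac{d\Pi_x}{dV_i}=T_i+T_i^\top$, and because the column space of $T_i$ lies in $\mathcal Y_\parallel$ while that of $T_i^\top$ lies in the orthogonal subspace $\mathcal Y_\perp$, the condition $\nabla_V\Pi_x=\mathbf 0$ is equivalent to $T_i=\mathbf 0$ for every $i$. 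Since $\mathbb I_N-\Pi_x$ is an orthogonal projection, each summand of $\widetilde M_\parallel=\sum_{i=0}^{n_c-1}\frac{dX}{dV_i}(\mathbb I_N-\Pi_x)\frac{dX^\top}{dV_i}$ is positive semidefinite, so $\widetilde M_\parallel\succeq\mathbf 0$ and the nonnegative scalar $\Phi(A,B,\{u_k\}):=\mathrm{Tr}(\widetilde M_\parallel)$ vanishes if and only if $\widetilde M_\parallel=\mathbf 0$, i.e. if and only if every $T_i=\mathbf 0$. Hence $\mathcal D=\{\Phi=0\}$, and $\Phi$ is computable without ever forming the pseudoinverse $X^{-1}$.

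Next I would exhibit $\Phi$ as a denominator-cleared real-analytic function on the whole parameter space $\mathbb R^{D}$, $D=n^2+n+N$. Because the sigmoid is real-analytic and each training state $x_k$ is a finite composition of analytic maps in the entries of $A,B$ and the inputs (for any fixed initialization and washout), both $X$ and the feedback sensitivities $\frac{dX}{dV_i}$ are analytic on all of $\mathbb R^{D}$; the latter obey the linearized recursion $s^{(i)}_{k+1}=D_k\big(B\,x_{k,i}+A\,s^{(i)}_k\big)$ with $D_k=\mathrm{diag}\,g'(Ax_k+Bu_k)$. On the full-rank locus one has the closed form $\Pi_x=X^\top K_{xx}^{-1}X$, so $\Phi$ is rational with sole denominator $\det K_{xx}=\det(XX^\top)$, and $\widetilde\Phi:=\Phi\,\det K_{xx}$ therefore extends to a genuine real-analytic function on the \emph{connected} space $\mathbb R^{D}$. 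On $\mathcal D$ the standing invertibility assumption gives $\det K_{xx}\neq0$ together with $\Phi=0$, so $\mathcal D\subseteq\{\widetilde\Phi=0\}$.

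The final step invokes the standard dichotomy: a real-analytic function on a connected open set is either identically zero or has zero set of Lebesgue measure zero and Hausdorff dimension at most $D-1$. It then suffices to produce a single ESN with $\Phi>0$, for then $\widetilde\Phi\not\equiv0$ and $\mathcal D$ is contained in a proper, lower-dimensional analytic subvariety, which is the assertion. The hard part will be this non-triviality witness: once it is secured the dimensional content is automatic, but one must rule out the possibility that the sensitivity $\frac{dX}{dV_i}$ always lies in the row space of $X$. Structurally, the injected bias term $B\,x_{k,i}$ in the sensitivity recursion drives $s^{(i)}$ out of the span of the unperturbed trajectory, so $\frac{dX}{dV_i}(\mathbb I_N-\Pi_x)\neq\mathbf 0$ for generic parameters; for the proposal it is enough to confirm $\Phi>0$ by direct computation on one explicit small example, say $n=2$ with generic $A,B$ and a non-constant input. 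I would close by noting that, exactly as in the previous lemma, $\Phi=\mathrm{Tr}(\widetilde M_\parallel)$ supplies a directly testable numerical criterion for whether a given ESN falls in the exceptional set.
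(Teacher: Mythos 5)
Your reading of the statement is the right one: despite the ``$\neq$'' in the lemma as printed, what must be shown small is the degenerate set $\mathcal{D}=\{(A,B,\{u_k\}): \nabla_V\Pi_x=\mathbf{0}\}$, and this is exactly what the paper's proof addresses. Your overall strategy is also the same as the paper's in its essential mechanism: everything in sight is real-analytic in $(A,B,\{u_k\})$ because the sigmoid is analytic, and one then invokes the rigidity of analytic functions (for the paper, the identity theorem; for you, the measure-zero/codimension-one dichotomy for zero sets) to conclude that the degenerate locus is either everything or thin. Your reduction of $\mathcal{D}$ to the zero set of the single scalar $\Phi=\mathrm{Tr}\bigl(\widetilde M_\parallel\bigr)$, together with the denominator-clearing step $\widetilde\Phi=\Phi\det K_{xx}$ that removes the pseudoinverse and makes the function globally analytic, is a clean and correct packaging --- arguably tidier than the paper's, which works directly with $\nabla_V S_{\mathrm{min}}$ and quantifies over all targets $\{y_k\}$.

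The genuine gap is the step you yourself flag as ``the hard part'': the non-triviality witness. The dichotomy yields nothing until you establish $\widetilde\Phi\not\equiv 0$, and your proposal defers this to an unperformed computation (``confirm $\Phi>0$ by direct computation on one explicit small example'') supported only by a heuristic about the bias term $B\,x_{k,i}$ driving the sensitivity out of the row space of $X$. That heuristic is not a proof --- ruling out that $\frac{dX}{dV_i}$ always lies in the row space of $X$ is precisely the mathematical content of the lemma, so as written the argument is circular at its one load-bearing joint. This is exactly where the paper does its real work, and it closes the gap without any explicit example: if $\nabla_V S_{\mathrm{min}}$ vanished on an open set of parameters, analyticity would force $S_{\mathrm{min}}$ to be constant in $V$ for \emph{all} parameters and targets; but a feedback gain $V$ of sufficiently large norm destroys the convergence (echo state) property and forces the cost above its feedback-free value, so $S_{\mathrm{min}}$ cannot be constant in $V$, a contradiction. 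To complete your proof you should either actually carry out the $n=2$ computation (verifying $\Phi>0$ at one concrete parameter point, which then certifies $\widetilde\Phi\not\equiv 0$ on the connected component in question), or import the paper's instability argument: non-constancy of $S_{\mathrm{min}}$ along the $V$-direction at some parameter point implies $Y^\top T_i Y\neq 0$ for some $i$ and some $Y$ there, hence some $T_i\neq\mathbf{0}$ and $\Phi>0$ at that point, which is all your dichotomy needs.
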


\begin{proof}
Our second part of this proof will show that in the space of all matrices $A$, vectors $B$, and input sequences $\{u_k\}$ that lead to a convergent RC, the subspace of these where $\nabla_V S_{\mathrm{min}} = \mathbf{0}$ for all $Y$ (or equivalently that $T_i = \mathbf{0}$ for all $i$) must be of lower dimension as well. If it was not of lower dimension, then that would imply that $\nabla_V S_{\mathrm{min}} = \mathbf{0}$ holds over a finite region of the possible values of $A, B,$ and $\{u_k\}$. If the function $f(x_k,u_k)$ representing the reservoir dynamics is analytic, then all reservoir states must be analytic functions of $A, B,$ and $\{u_k\}$ as well. In this paper, we are considering ESNs with $f(x_k,u_k) = \sigma(A x_k + B u_k)$, where $\sigma(z)$ is the element-wise sigmoid function, which is analytic. So if all reservoir states are analytic in  $A, B,$ and $\{u_k\}$, then $S_{\mathrm{min}}$ is an analytic function of these variables as well, so if $\nabla_V S_{\mathrm{min}} = \mathbf{0}$ holds over a finite region of the possible values of these variables, then $S_{\mathrm{min}}$ must be constant with respect to $V$ for all $A, B, \{u_k\}$ and $\{y_k\}$, which is a direct consequence of the identity theorem of an analytical function \cite{Rudin:1976}. However, the ESN will become unstable for a $V$ with a sufficiently large norm, in which case the fit to $\{y_k\}$ will be poor and $S_{\mathrm{min}}$ would have to be larger than if we had no feedback. Thus $S_{\mathrm{min}}$ must not be constant with respect to $V$ for all $A, B, \{u_k\}$ and $\{y_k\}$, and therefore the space of matrices $A$, vectors $B$ and training inputs $\{u_k\}$ that satisfy $\nabla_V \Pi_x = \mathbf{0}$ is of lower dimension than the space of all possible $(A,B, \{u_k\})$.
\end{proof}

\begin{lemma}[Lower dimensionality of the subdomain of $S_{\mathrm{min}}(A+B  V^\top,B,\{u_k\}, \{y_k\})$ for which $\nabla_V S_{\mathrm{min}} = \mathbf{0}$]
\label{lemmafull}
    The dimension of the space of matrices, vectors, input sequences, and target sequences $(A,B,\{u_k\}, \{y_k\})$ that satisfy $\nabla_V S_{\mathrm{min}} = \mathbf{0}$ is strictly less than the dimension of the space of all possible $(A,B,\{u_k\}, \{y_k\})$, which implies that the number of cases in which $S_{\mathrm{min}}$ has a null gradient w.r.t. $V$ is vanishingly small compared to all cases.
\end{lemma}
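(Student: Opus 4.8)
The plan is to combine Lemmas \ref{lemmapt1} and \ref{lemmapt2} through a fibration argument over the full parameter space. First I would fix the ambient space: a point is a tuple $(A,B,\{u_k\},\{y_k\})$ with $A\in\mathbb{R}^{n\times n}$, $B\in\mathbb{R}^n$, and $\{u_k\},\{y_k\}\in\mathbb{R}^N$, so the total space has dimension $D = n^2 + n + 2N$ (intersected, if desired, with the open and hence full-dimensional convergent region $A^\top A < a^2\mathbb{I}_n$). Let $\mathcal{B} = \{(A,B,\{u_k\},\{y_k\}) : \nabla_V S_{\mathrm{min}} = \mathbf{0}\}$ be the set whose dimension must be bounded. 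Since $S_{\mathrm{min}}$ is analytic in all its arguments (the sigmoid is analytic, so every reservoir state, and hence every $T_i$, is analytic in $(A,B,\{u_k\})$, while $\nabla_V S_{\mathrm{min}} = -Y^\top T_i Y$ is polynomial in $Y$ with analytic coefficients), the set $\mathcal{B}$ is a real-analytic subset of the parameter space; for such sets the notions ``dimension strictly below $D$'', ``Lebesgue-measure zero'', and ``vanishingly small'' coincide, and this equivalence is what I would use to interpret the claim.

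Next I would split $\mathcal{B}$ according to whether the projection genuinely responds to feedback, writing $\mathcal{B} = \mathcal{B}_1 \cup \mathcal{B}_2$, where $\mathcal{B}_2$ collects the tuples with $\nabla_V \Pi_x = \mathbf{0}$ (equivalently $T_i = \mathbf{0}$ for all $i$) and $\mathcal{B}_1$ collects those with $\nabla_V \Pi_x \neq \mathbf{0}$ but $\nabla_V S_{\mathrm{min}} = \mathbf{0}$ nonetheless. The set $\mathcal{B}_2$ is the easy piece: when $\nabla_V \Pi_x = \mathbf{0}$ the gradient vanishes for \emph{every} $Y$, so $\mathcal{B}_2$ is exactly the product of the lower-dimensional base supplied by Lemma \ref{lemmapt2} with the full fiber $\mathbb{R}^N$ of target sequences, and a product of a lower-dimensional (measure-zero) set with $\mathbb{R}^N$ still has dimension strictly below $D$.

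The core of the argument is $\mathcal{B}_1$, which I would handle by fibering over the base variables $(A,B,\{u_k\})$. For every base point in the region $\nabla_V \Pi_x \neq \mathbf{0}$, Lemma \ref{lemmapt1} shows that the fiber of admissible target sequences $Y$ has dimension at most $N-1$, hence is Lebesgue-null in $\mathbb{R}^N$. The base has dimension at most $n^2 + n + N$, so the expected total dimension is at most $(n^2+n+N) + (N-1) = D - 1 < D$. To make this rigorous I would invoke Fubini's theorem: $\mathcal{B}_1$ is Borel (being a real-analytic set), and integrating its vanishing $Y$-fiber measure over the base yields that $\mathcal{B}_1$ has measure zero in $\mathbb{R}^D$, hence dimension $< D$. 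Finally, $\mathcal{B} = \mathcal{B}_1 \cup \mathcal{B}_2$ is a union of two sets of dimension strictly below $D$, so $\dim \mathcal{B} < D$, which is the claim.

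The main obstacle is precisely this fibration step: Lemmas \ref{lemmapt1} and \ref{lemmapt2} each control only one coordinate direction at a time (the $Y$-fiber for a fixed ESN, or the ESN base with $Y$ unconstrained), and promoting these pointwise and base-only bounds to a single global statement about $\dim \mathcal{B}$ requires either the Fubini/measure-zero route sketched above or an appeal to the fiber-dimension theorem for real-analytic sets. One must confirm that the fiber-dimension bound is uniform over the region $\nabla_V \Pi_x \neq \mathbf{0}$ (it is, since the positivity of $m_\parallel,m_\perp,m_I$ holds throughout that region) and that the analytic structure guarantees the measurability needed for Fubini, so that the final equivalence between ``lower dimension'' and ``vanishingly small'' is legitimate.
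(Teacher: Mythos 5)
Your proposal follows essentially the same route as the paper: split the bad set according to whether $\nabla_V \Pi_x$ vanishes, apply Lemma \ref{lemmapt2} to the piece where it does and Lemma \ref{lemmapt1} fiberwise (over fixed $(A,B,\{u_k\})$) to the piece where it does not, then take the union. The only difference is that you make the passage from fiberwise smallness to smallness of the full set explicit via Fubini and the analyticity/measurability of the defining conditions, whereas the paper simply asserts that lower dimension in each factor yields lower dimension in the product; this added rigor is welcome but does not change the argument.
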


\begin{proof}
Lemma \ref{lemmapt1} proves that when $\nabla_V \Pi_x \neq \mathbf{0}$, the number of cases where $\nabla_V S_{\mathrm{min}} = \mathbf{0}$ is vanishingly small in the space of all training sequences $\{y_k\}$, and therefore also in the space of all possible $(A,B,\{u_k\}, \{y_k\})$. Lemma \ref{lemmapt2} proves that when $\nabla_V \Pi_x = \mathbf{0}$, the number of cases where $\nabla_V S_{\mathrm{min}} = \mathbf{0}$ is vanishingly small in the space of all matrices $A$, vectors $B$, and training inputs $\{u_k\}$, and therefore for all training sequences  $\{y_k\}$ as well. Therefore, the number of cases of $(A,B,\{u_k\}, \{y_k\})$ where $\nabla_V S_{\mathrm{min}} = \mathbf{0}$ is vanishingly small over all possible $(A,B,\{u_k\}, \{y_k\})$, regardless of $\nabla_V \Pi_x$.
\end{proof}

Finally, we note that all of this work dedicated to finding where $\nabla_V S_{\mathrm{min}} = \mathbf{0}$ is a necessary but not sufficient condition for proving that feedback will not improve the result. That is, the points where $\nabla_V S_{\mathrm{min}} = \mathbf{0}$ correspond to the extrema of $S_{\mathrm{min}}$ with respect to $V$, but these extrema could be minima, maxima, or saddle points. However, only minima will prevent feedback from improving the output of an ESN, and if we use a non-local method to find the global minimum of $S_{\mathrm{min}}$ with respect to $V$, then local minima do not mitigate improvement, either.

It is possible to compute $\nabla_V S_{\mathrm{min}}$ without much extra overhead for any given run of a RC to see if it is zero. The derivatives $\frac{dx_k}{d V_i}$ can be calculated iteratively using the relation 
\begin{align}
    \frac{dx_k}{dV_i} &= \Sigma_k\left(B~x_{k-1,i}+\overline{A}\frac{dx_{k-1}}{dV_i}\right) \\
    \Sigma_{k,ij} &= \delta_{ij} \sigma'(z_{k-1,i})=\delta_{ij} x_{k,i}(1-x_{k,i}),
\end{align}
where $\overline{A} = A + B V^\top$. This uses $A$, $B$, and the reservoir states $x_k$ that have already been obtained from running of the RC. We can also avoid computing $N\times N$ matrices like $\Pi_x$ directly by noting that from previous results we have $Y \Pi_x Y = K_{xy} K_{xx}^{-1} K_{xy}$. $K_{xx}^{-1}$ was already computed when optimizing for $W$, so there is no additional matrix inversion needed to find $\nabla_V S_{\mathrm{min}}$. It is also feasible to check whether this due to the specific $Y$ or a symptom of the RC by checking if $m_\parallel = 0$ using $\mathrm{Tr}(\widetilde{M}_\parallel)$ defined in Eq. \eqref{eq:Mtilde}. $\widetilde{M}_\parallel$ has the same form as $S_{\mathrm{min}}$, but with $Y$ replaced with the matrix $\frac{dX}{dV_i}$, so by replacing $y_k$ with $\frac{dx_{k,j}}{d V_i}$ for every $i$ and $j$ in the definition of $K_{xy}$ we can check if $\mathrm{Tr}(\widetilde{M}_\parallel)$ is zero without much extra work.

\subsection{Proving the Universal Superiority of ESNs with Feedback}
We are now ready to finally present the proof of Theorem \ref{thm1} since we know that $\nabla_V S_{\mathrm{min}}$ is nonzero except on a lower dimensional subspace. 

\begin{proof}[\textbf{Proof of Theorem \ref{thm1}}]
Note that $S_\mathrm{min}$ is a real analytic functional with respect to matrix $A$ (see the definition of $S$ in Eq. \eqref{eq:cost-function}), having the Taylor series
\begin{align}
    &S_\mathrm{min} (A+B V^\top,B,\{u_k\}, \{y_k\}) \nonumber \\
    &= S_\mathrm{min} (A,B, \{u_k\}, \{y_k\}) + \mathrm{Tr} \left[ \left(\nabla_A S_\mathrm{min} (A,B, \{u_k\}, \{y_k\}) \right) (B V^\top)^\top \right] + \mathcal{O}[\delta A^2], \label{eq:Taylor-S2}
\end{align}
where the last term consolidates the second and the higher order of $\delta A = (A + B V^\top) - A = B V^\top$. A reasonable ansatz for $V$ that reduces the second term most is
\begin{equation}
    V = - \alpha \nabla_A S_\mathrm{min} (A,B,\{u_k\}, \{y_k\})^\top  B,
\end{equation}
where $\alpha>0$ is a constant to be determined. We now calculate the second term as
\begin{equation}
    \mathrm{Tr} [ \nabla_A S_\mathrm{min} (A,B,\{u_k\}, \{y_k\})) (B V^\top)^\top] = -\alpha \beta,
\end{equation}
where
\begin{align}
    \beta &= \mathrm{Tr} [ \nabla_A S_\mathrm{min} (A,B,\{u_k\}, \{y_k\})^\top (B B^\top) \nabla_A S_\mathrm{min} (A,B,\{u_k\}, \{y_k\}) ] \nonumber \\
    &= || \nabla_A S_\mathrm{min}(A,B,\{u_k\}, \{y_k\})^\top  B ||^2 \geq 0.
\end{align}
If $\beta > 0$, one can always choose an arbitrarily small $\alpha(>0)$ such that
\begin{equation}
    \alpha > \frac{\left| \mathcal{O} (\alpha^2) \right|}{\beta}. \label{eq:alpha-inequality}
\end{equation}
This is because the left side is linear with respect to $\alpha$ whereas the right side is higher-order polynomial of $\alpha$, and therefore, such (arbitrarily small) $\alpha$  satisfying the above always exists. 

The only case where such $\alpha$ cannot be found is the case where $\beta = 0$:
\begin{equation}
    \nabla_V S_{\mathrm{min}}(A+B  V^\top,B,\{u_k\}, \{y_k\}) = \nabla_A S_{\mathrm{min}}(A+B  V^\top,B,\{u_k\}, \{y_k\})^\top  B = \mathbf{0}.\label{eq:dead}
\end{equation}
However, we have proved in Lemma \ref{lemmafull} that the number of cases in which this occurs is vanishingly small. Then, the strict inequality in Eq. \eqref{eq:thm1-inequality} is proved in almost every case.  

Now, we prove that $\overline{A} = A + B  V^\top$ with $V = - \alpha \nabla_A S_\mathrm{min} (A,B,\{u_k\}, \{y_k\})^\top  B$ will make the ESN convergent. The set $\mathcal{A}_a =\{A \in \mathbb{R}^{n \times n} \mid  A^{\top}A < a^2 \mathbb{I}_n\}$ is an open convex set in $\mathbb{R}^{n \times n}$ for any $a>0$. As has been shown earlier, for the overwhelming majority of $(B, \{u_k\}, \{y_k\})$ there is always a choice of $V$ that decreases  $S_{\rm min}(A,B,\{u_k\}, \{y_k\})$. Since $A \in \mathcal{A}_a$, by the  continuity of the maximum singular value of $A$ with respect to $A$ (for any choice of matrix norm) and by the  particular choice of $V$, there always exists a small number $\delta>0$ such that $A + \delta B  V^{\top} \in \mathcal{A}_a$ (guaranteeing that the ESN with feedback remains convergent) while still decreasing the cost, $S_{\rm min}(A + B \delta V^{\top},B,\{u_k\}, \{y_k\})< S_{\rm min}(A,B,\{u_k\}, \{y_k\}$). This concludes the proof of Theorem \ref{thm1}.

\end{proof}

\subsection{Superiority of ESNs with Feedback for the Whole Class of ESNs}

According to Theorem \ref{thm1}, the cost of the ESN with feedback is guaranteed to be smaller than the cost of the ESN without feedback for almost all fixed $(A,B,\{u_k\}, \{y_k\})$. Next, the following corollary states that ESN with feedback exceeds the performance of ESN without feedback in the whole class of ESNs.

\begin{corollary}[Universal superiority of ESN with feedback over the whole class]
   For given and fixed finite input and output sequences $\{u_k\}=\{u_k\}_{k=1,\ldots,N}$ and $\{y_k\}=\{y_k\}_{k=1,\ldots,N}$, let $A$ and $B$ be drawn randomly according to some probability measure $\mathbb{P}$ on $\mathbb{R}^{n \times n} \times \mathbb{R}^n$. Let $\mathcal{X} = \{(A,B) \in \mathbb{R}^{n \times n} \times \mathbb{R}^n \mid A^{\top}A < a^2 \mathbb{I} \}$ and $\mathbb{P}$ be such that $\mathbb{P}(\mathcal{X})=1$. Let $\mathcal{Y} = \mathcal{X} \cap \{(A,B) \mid \nabla_{V}S_{\rm min}(A+BV^{\top},B,\{u_k\},\{y_k\}) \neq 0 \}$ and choose $\mathbb{P}$ such that $\mathbb{P}(\mathcal{Y})>0$. Let $\langle S_\mathrm{min} \rangle_{A, B} = \mathbb{E}\left[S_\mathrm{min}(A,B)\right]$, where the expectation (average) is taken with respect to the probability measure $\mathbb{P}$, for a fixed training dataset $\{u_k\}, \{y_k\}$. Then, for the given training data set, ESN with feedback on average has a smaller cost function values than ESN without feedback for various $(A,B)$ on average. That is, the following holds on average over all possible $(A,B)$:
    \begin{equation}
        \langle S_\mathrm{min} (A,B,\{u_k\}, \{y_k\}) \rangle_{A,B} > \langle \min_V S_\mathrm{min} (A + B V^\top, B,\{u_k\}, \{y_k\}) \rangle_{A,B}.
    \end{equation}
\end{corollary}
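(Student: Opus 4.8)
The plan is to read this corollary as the integrated form of the pointwise statement already secured in Theorem~\ref{thm1}, so that the entire argument collapses to one elementary measure-theoretic observation: a nonnegative measurable function that is strictly positive on a set of positive measure has strictly positive integral.

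First I would pin down the pointwise non-strict inequality valid for \emph{every} admissible pair, not just the generic ones. For any $(A,B)\in\mathcal{X}$ the feedback gain $V=\mathbf{0}$ is always feasible, since it gives $\overline{A}=A+B\mathbf{0}^\top=A$, which is convergent by $A^\top A < a^2\mathbb{I}_n$. Hence the minimization over admissible $V$ can never do worse than $V=\mathbf{0}$, so
\[
\min_V S_\mathrm{min}(A+BV^\top,B,\{u_k\},\{y_k\}) \le S_\mathrm{min}(A,B,\{u_k\},\{y_k\}).
\]
Define the gap $D(A,B) = S_\mathrm{min}(A,B,\{u_k\},\{y_k\}) - \min_V S_\mathrm{min}(A+BV^\top,B,\{u_k\},\{y_k\})$, so that $D \ge 0$ on $\mathcal{X}$; since $\mathbb{P}(\mathcal{X})=1$, this holds $\mathbb{P}$-almost everywhere.

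Next I would upgrade the inequality to a strict one exactly on the set where it matters. For $(A,B)\in\mathcal{Y}$ we have $\nabla_V S_\mathrm{min}(A+BV^\top,B,\{u_k\},\{y_k\})\neq\mathbf{0}$, which is precisely the generic condition under which the proof of Theorem~\ref{thm1} (the case $\beta>0$) delivers the strict inequality \eqref{eq:thm1-inequality}; thus $D(A,B)>0$ on $\mathcal{Y}$, and by hypothesis $\mathbb{P}(\mathcal{Y})>0$. Integrability is not an issue, because $S_\mathrm{min}\in[0,\tfrac12\sigma_y^2]$ is uniformly bounded (the output data are fixed, so $\sigma_y^2$ is a finite constant), whence $D$ is a bounded nonnegative function and both expectations are finite.

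Finally I would integrate. Writing $\mathcal{Y}_n = \{(A,B)\in\mathcal{Y} : D(A,B) > 1/n\}$, we have $\mathcal{Y}=\bigcup_{n\ge1}\mathcal{Y}_n$ with the $\mathcal{Y}_n$ increasing, so $\mathbb{P}(\mathcal{Y})>0$ forces $\mathbb{P}(\mathcal{Y}_{n_0})>0$ for some $n_0$ by continuity from below of the measure, and then $\int_{\mathcal{Y}} D\,d\mathbb{P} \ge \tfrac{1}{n_0}\mathbb{P}(\mathcal{Y}_{n_0}) > 0$. Combining with $D\ge0$ almost everywhere gives $\langle S_\mathrm{min}\rangle_{A,B} - \langle \min_V S_\mathrm{min}\rangle_{A,B} = \mathbb{E}[D] \ge \int_{\mathcal{Y}} D\,d\mathbb{P} > 0$, which is exactly the claimed strict inequality. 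The one genuinely technical point, and the step I expect to require the most care, is verifying that $(A,B)\mapsto \min_V S_\mathrm{min}(A+BV^\top,B,\{u_k\},\{y_k\})$ is $\mathbb{P}$-measurable, since it is an infimum of the analytic map $S_\mathrm{min}$ over the $(A,B)$-dependent feasible set $\{V : A+BV^\top\in\mathcal{A}_a\}$; this follows from joint continuity of $S_\mathrm{min}$ together with the fact that this feasible set is the continuously varying preimage of the open convex set $\mathcal{A}_a$, but it deserves to be stated explicitly rather than taken for granted.
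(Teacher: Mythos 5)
Your proof is correct, but it takes a genuinely different route from the paper's. You reduce the corollary entirely to Theorem~\ref{thm1} plus one elementary measure-theoretic fact: since $V=\mathbf{0}$ is always feasible, the gap $D(A,B)=S_\mathrm{min}(A,B)-\min_V S_\mathrm{min}(A+BV^\top,B)$ is nonnegative on $\mathcal{X}$; on $\mathcal{Y}$ the case $\beta>0$ in the proof of Theorem~\ref{thm1} makes $D$ strictly positive; and a nonnegative function that is positive on a set of positive $\mathbb{P}$-measure has positive integral. The paper instead re-derives the pointwise statement from scratch via the orthogonal decomposition $A=BZ_{AB}^\top+A_\perp$ with $Z_{AB}=A^\top B/\|B\|^2$, observing that feedback translates the coordinate $Z_{AB}\mapsto Z_{AB}+V$ while leaving $A_\perp$ fixed, so that minimizing over $V$ is exactly minimizing over the component of $A$ along $B$; the strict inequality is then obtained by comparing the conditional expectation $\mathbb{E}[\tilde S_{\mathrm{min}}\mid A_\perp,B]$ against the minimum over that coordinate and splitting the integral over $\mathcal{Y}$ and $\mathcal{X}\setminus\mathcal{Y}$. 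The paper explicitly acknowledges, in the remark immediately following its proof, that your shorter route is available; the authors chose the longer path because the decomposition explains structurally \emph{why} feedback helps (it re-optimizes precisely the degrees of freedom of $A$ in the direction of $B$), whereas your argument is the more economical one. Your closing caveat --- that measurability of $(A,B)\mapsto\min_V S_\mathrm{min}(A+BV^\top,B,\{u_k\},\{y_k\})$ over the $(A,B)$-dependent feasible set needs explicit justification --- flags a point the paper also leaves essentially implicit, so raising it is a genuine (if minor) improvement rather than a gap in your own argument.
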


\begin{proof}
To prove the theorem over the whole class, we adopt a slightly different approach. The cost function after minimizing for $C$ and $W$ can be written as a function of the ESN parameters $S_{\mathrm{min}}(A,B)$ (short for $S_\mathrm{min} (A,B,\{u_k\}, \{y_k\})$). With feedback using a vector $V$, the new minimum is given by $S_{\mathrm{min}}(\overline{A},B) = S_{\mathrm{min}}(A + B V^\top,B)$. Let us separate the  matrix $A$ into two components given by
\begin{align}
    A &= A_{||}+A_\perp, \\
    A_{||} &= B  Z_{AB}^\top \quad \mathrm{where} \quad Z_{AB} = \frac{A^\top B}{||B||^2}, \\
    A_{\perp} &= \left(\mathbb{I}_{\mathrm{dim}(A)}-\frac{B B^\top}{||B||^2} \right) A = A-B Z_{AB}^\top,
\end{align}
This is essentially pulling the degrees of freedom of $A$ in the $B$ direction apart from all other degree of freedom, so that $A_{||}$ and $A_{\perp}$ are independent. This is best shown by calculating the inner product:
\begin{align}
    \mathrm{Tr} \left[ A_{||}^\top A_\perp \right] &= \mathrm{Tr} \left[ A^\top\frac{B B^\top}{||B||^2}\left(\mathbb{I}_{\mathrm{dim}(A)}-\frac{B B^\top}{||B||^2} \right) A \right] = \mathrm{Tr} \left[A^\top\left(\frac{B B^\top}{||B||^2}-\frac{B B^\top}{||B||^2} \right) A\right] = 0.
\end{align}
Thus we can write the minimized cost function as a function of these independent variables so that we can define $\tilde{S}_{\mathrm{min}}(Z_{AB},A_{\perp},B) = S_{\mathrm{min}}(A,B)$.
With feedback, these quantities become $\overline{Z}_{AB}= Z_{AB}+V$ and $\overline{A}_{\perp} = \overline{A}-B \overline{Z}_{AB}^\top = A-B Z_{AB}^\top = A_{\perp}$, and therefore $S_{\mathrm{min}}(\overline{A},B) = \tilde{S}_{\mathrm{min}}(Z_{AB}+V,A_{\perp},B)$. This all means that when we use feedback and optimize with respect to $V$ we are equivalently choosing $\min_{V}(\tilde{S}_{\mathrm{min}}(V,A_{\perp},B))$, the minimum value of $\tilde{S}_{\mathrm{min}}$ as a function of $Z_{AB}$, subject to the convergence constraint. We note that, therefore, 
\begin{equation}
    \min_{V}(\tilde{S}_{\mathrm{min}}(Z_{AB}+V,A_{\perp},B)) \leq \tilde{S}_{\mathrm{min}}(Z_{AB},A_{\perp},B), \label{eq:interim1}
\end{equation} 
since the cost function can further be reduced by optimizing an additional degree of freedom $A_{||}$ (equivalently, $V$). The equality occurs when $V = \mathbf{0}$ is the optimal solution, making the feedback unnecessary. One can verify that the derivative of the cost function with respect to $V$ is not zero except for vanishingly small cases of $(A,B)$. To show this, let us take the derivative of $S_{\mathrm{min}}$ at $V = \mathbf{0}$:
\begin{equation}
    \nabla_V S_\mathrm{min} (A + B V^\top, B)|_{V = \mathbf{0}} = B^\top  \nabla_A S_\mathrm{min} (A, B).
\end{equation}
The condition for this to become zero is exactly the same condition appearing in Eq. \eqref{eq:dead}, for which we proved that only vanishingly small number of $(A,B)$ satisfy the above equation. Therefore, the strict inequality holds for most of $(A,B)$. 

The average cost without feedback is given by $\langle \tilde{S}_{\mathrm{min}}\rangle_{Z_{AB},A_{\perp},B}=\mathbb{E}\left[  \tilde{S}_{\mathrm{min}}(Z_{AB},A_{\perp},B)\right]$, averaging over all three variables. But with feedback, as stated above this is equivalent to minimizing the cost with respect to $Z_{AB}$, so the average cost with feedback is given by $\langle \min_{V}(\tilde{S}_{\mathrm{min}}(V,A_{\perp},B))\rangle_{A_{\perp},B}$. The average over $Z_{AB}$ does nothing in this case because all of the different initial $Z_{AB}$ will get shifted to the minimizing value. Because $\tilde{S}_{\mathrm{min}}(Z_{AB},A_{\perp},B)\geq\min_{V}(\tilde{S}_{\mathrm{min}}(V,A_{\perp},B))$ for each individual choice of $Z_{AB}, A_{\perp},$ and $B$, and $Z_{AB}$ and $A_{\perp}$ are measurable functions of $A$ and $B$ by construction, we must have that $\langle \tilde{S}_{\mathrm{min}}(Z_{AB},A_{\perp},B)\rangle_{Z_{AB}} \triangleq \mathbb{E}[\tilde{S}_{\mathrm{min}}(Z_{AB},A_{\perp},B)\mid A^{\perp},B] \geq\min_{V}(\tilde{S}_{\mathrm{min}}(V,A_{\perp},B))$ for all $(A,B) \in \mathcal{X}$. Furthermore, equality only holds if every choice of $Z_{AB}$ yields the same value of $\tilde{S}_{\mathrm{min}}(Z_{AB},A_{\perp},B)$. By the definition of $\mathcal{X}$ and $\mathcal{Y}$ and the hypothesis that $\mathbb{P}(\mathcal{X})=1$, and since the number of $(A_\perp,B)$ that makes $\tilde{S}_\mathrm{min}(Z_{AB},A_\perp,B)$ completely independent from $Z_{AB}$ is vanishingly small (c.f. the proof of Theorem \ref{thm1}), one can always choose $\mathbb{P}$ under which $A$ and $B$ are sampled to be such that $\mathbb{P}(\mathcal{Y})>0$. Therefore, we have the strict inequality when averaged over $(A_\perp,B)$:
\begin{align}
    \lefteqn{\langle \tilde{S}_{\mathrm{min}}(Z_{AB},A_{\perp},B)\rangle_{Z_{AB},A_{\perp},B}} \notag \\
    &=\int_{\mathcal{Y}} \langle \tilde{S}_{\mathrm{min}}(Z_{AB},A_{\perp},B)\rangle_{Z_{AB}}(A,B) \mathbb{P}(dA,dB) \notag \\
    &\quad + \int_{\mathcal{X} \backslash \mathcal{Y}} \langle \tilde{S}_{\mathrm{min}}(Z_{AB},A_{\perp},B)\rangle_{Z_{AB}}(A,B) \mathbb{P}(dA,dB)\notag\\
&>     \int_{\mathcal{Y}} \min_{V} \tilde{S}_{\mathrm{min}}(V,A_{\perp},B) \mathbb{P}(dA,dB) + \int_{\mathcal{X} \backslash \mathcal{Y}} \min_{V} \tilde{S}_{\mathrm{min}}(V,A_{\perp},B) \mathbb{P}(dA,dB) \notag\\
&=\langle \min_V S_\mathrm{min} (A + B V^\top, B,\{u_k\}, \{y_k\}) \rangle_{A,B}.
\end{align}
Thus an ESN with feedback will always do better than an ESN without feedback over the whole ESN class on average, given the same number of computational nodes.
\end{proof}

We note that this corollary could be proven more succinctly using Theorem \ref{thm1} under the same hypothesis on the probability measure $\mathbb{P}$ under which $A$ and $B$ are sampled by arguing that the average over ESNs will always include cases where the strict inequality \eqref{eq:thm1-inequality} holds, and therefore the average also obeys a strict inequality. However, the proof given here adopted a different path from the proof of Theorem \ref{thm1}, providing an alternative explanation of the corollary.

\section{Optimization of ESN with Feedback}
\label{sec:optimize}

One of the main advantages to using an ESN is that the training procedure is a linear regression problem that can be solved exactly without much computational effort. To use the ESN to match a target sequence $\{y_k\}$ for a given input sequence $\{u_k\}$, we first run the ESN driven by the input for a number of steps until the initial state of the network is forgotten. This ensures that the states of the network is close to the unique sequence solely determined by the input that is guaranteed to exist by the uniform convergence property. In most of our simulations, we let the ESN run for $500$ steps before beginning training, which appears to be significantly more than necessary for our examples. We were able to use as few as $19$ steps of startup for some of our tests without any issue.

After this initial set of steps that insure that the system has converged to the input dependent state sequence, we then record the values of the state for the entire range of training steps, which we define to be a total of $N$ steps staring from $k=0$. We then define the network's output to be $\hat{y}_k = W^\top x_k+C$, where the parameters $C$ and $W$ are optimized using the cost function given in Eq. \eqref{eq:cost-function}, and whose exact solutions are given in Eqs. (\ref{eq:optimizeC},\ref{eq:optimize}). Then, any future step in $\{y_k\}$ is estimated using $\hat{y}_k = W^\top x_k+C$ for some $k\ge N$.

With feedback, we also must optimize with respect to the feedback vector $V$ to determine the modified input sequence $\{u_k+V^\top x_k\}$. Since the network states $\{x_k\}$ will have a highly complex and nonlinear dependence on $V$, we cannot solve it exactly as we do with $C$ and $W$. It also turns out that, unfortunately, the cost function is not a convex function with respect to $V$ (see Fig. \ref{fig:nonconvex}). Therefore, a simple gradient descent or a linear regression for optimizing (training) $V$ is not guaranteed to converge to the global minimum of the cost function. In this case, optimizing $V$ to minimize the cost function is tricky. 

\begin{figure}
\centering
\includegraphics[width=0.55\linewidth]{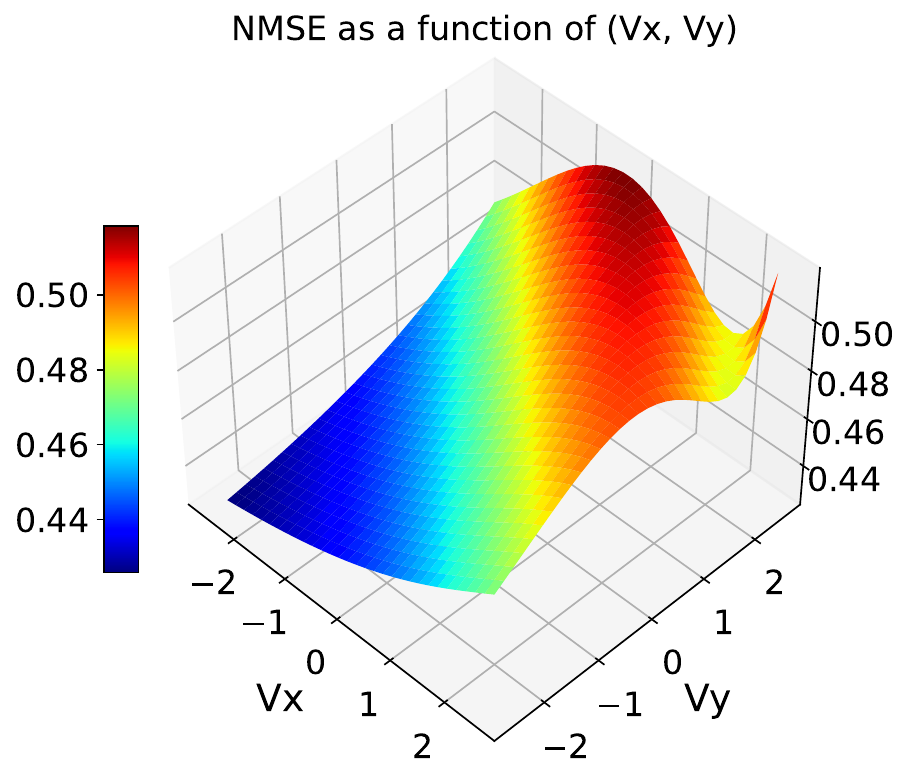}
\caption{3D plot of the non-convex dependence of the NMSE on $V$ for an ESN with 2 computational modes. Here we optimize $C$ and $W$ for the Mackey-Glass task and use 1000 training data points after 500 initial steps in our ESN. This plot shows only a portion of the full space of convergent feedback vectors to better illustrate the non-convexity.}
\label{fig:nonconvex}
\end{figure}

In fact, a good candidate for $V$ (which is at least locally optimal) is obtained by choosing $\alpha = \overline{\alpha}$ such that the difference between the left and the right hand side of the inequality \eqref{eq:alpha-inequality} becomes maximum: 
\begin{equation}
    \overline{\alpha} = \operatorname*{argmax}_\alpha  \left( \alpha - \frac{\left|\mathcal{O}(\alpha^2)\right|}{\beta} \right).
\end{equation}
Then, a good $V$ should be given by $V = - \overline{\alpha} \nabla_A S_\mathrm{min} (A,B)^\top  B$. Therefore, a strategy to optimize $V$ is, first, to perform the optimization for $W$ assuming there is no feedback, which will result in $S_\mathrm{min} (A,B)$. Then, one calculates $\nabla_A S_\mathrm{min} (A,B)$, which will require additional optimization of $W$ for given perturbed $A + \Delta A$ for each entry of $A$. Such perturbation requires $n \times n$ number of optimizations of $W$'s for different $A$'s. This will allow to calculate $\nabla_A S_\mathrm{min}(A,B)$, which will lead to a good $V = - \alpha \nabla_A S_\mathrm{min} (A,B)^\top B$. We, however, note that obtaining $\overline{\alpha}$ requires the calculations of $\nabla_A S_\mathrm{min} (A,B)$, $\nabla_A^2 S_\mathrm{min} (A,B)$, $\nabla_A^3 S_\mathrm{min} (A,B)$, etc., which are computationally demanding. Thus, in practice, we use different method that is more practical. 

In our numerical examples, we used a standard batch gradient descent method to optimize $V$, with a forced condition that ensures that the ESN will remain convergent during every step. We cannot use stochastic gradient descent because of the causal nature of the ESN, meaning that the order and size of the training set influences the optimal value of $V$. The proofs in Section \ref{sec:proof} guarantee that gradient descent will almost always provide an improvement to the fit. The mathematical details of the gradient descent method that we used is explained in \ref{sec: Details of Batch Gradient Descent Method used in Numerical Examples}. 

The method we detail in \ref{sec: Details of Batch Gradient Descent Method used in Numerical Examples} is essentially a reduced version of the standard real-time recurrent learning (RTRL) algorithm \cite{williams1989learning, robinson1987utility}, where only some of the internal parameters are modified corresponding to the vector $V$ rather than all of the parameters in $A$. For an ESN with $n_c$ computational nodes, the memory requirements and computational complexity per time step for our algorithm are $\mathcal{O}(n_c^2)$ and $\mathcal{O}(n_c^3)$, respectively, in contrast to the $\mathcal{O}(n_c^3)$ and $\mathcal{O}(n_c^4)$ complexities for full RTRL \cite{williams1995gradient}. This is because our method only requires the use of the rank 2 tensor $\frac{dx_{k,l}}{dV_i}$ at a time step $k$ rather than the rank 3 tensor $\frac{dx_{k,l}}{dA_{ij}}$ used in full RTRL. Alternatively, one could also use the standard backpropagation through time (BPTT) algorithm \cite{rumelhart1986learning, werbos1990backpropagation} to optimize for $V$. The derivative of the cost function $S_\mathrm{min}(A,B)$ with respect to a component of $V$ is given by
\begin{align}
    \frac{dS_\mathrm{min}(A,B)}{dV_i} = \sum_{j,l=1}^{n_c}\frac{dS_\mathrm{min}(A,B)}{d(A_{jl}+B_jV_l)}\frac{d(A_{jl}+B_jV_l)}{dV_i} = \sum_{j=1}^{n_c}\frac{dS_\mathrm{min}(A,B)}{dA_{ji}}B_j.
\end{align}
The standard BPTT algorithm can then be applied to calculate $\nabla_V S_\mathrm{min} (A,B) = \nabla_A^\top S_\mathrm{min} (A,B) B$ with a memory requirement of $\mathcal{O}(n_c N)$ and computational complexity per time step of $\mathcal{O}(n_c^2)$, where $N$ is the number of time steps in the training data set.

To begin our gradient descent routine, we start at $V_0=\mathbf{0}$. First, we run the ESN without feedback and optimize for $C$ and $W$ as usual. Then, we calculate $\nabla_V S_{\mathrm{min}}$, which as we will demonstrate below can be done using only quantities already obtained from running the ESN. We then choose $V_1 = V_0 - \eta \nabla_V S_{\mathrm{min}}$ to be our new feedback vector for the next step, where $\eta$ is a learning rate that must be chosen beforehand. We repeat this process many times, running the ESN with $\{u_k+V_i^\top x_k\}$ as the input sequence, optimizing for $C$ and $W$ under the new inputs, and then recalculating $\nabla_V S_{\mathrm{min}}$ to update the feedback vector for the next step using $V_{i+1} = V_i - \eta \nabla_V S_{\mathrm{min}}$. This is performed for a set number of iterations. In the event that the gradient descent converges to an ESN that is unstable, we will also detail a procedure we use to keep every $V_i$ within a certain convex region for which the ESN is guaranteed to be stable.

In a comparison between the computational complexities of training an ESN with feedback versus and ESN without, the main two considerations will be the number of gradient descent steps $n_{\mathrm{grad}}$ taken as well as the complexity of computing the gradients discussed above. Let us first define the computational complexity per time step of training an ESN with $n_c$ computational nodes without feedback to be $\mathcal{O}(\mathcal{C}_{n_c})$. For a software implementation, $\mathcal{C}_{n_c}$ will be at least $n_c^2$ due to the linear transformation $Ax_k+B u_k$ at each time step $k$, but it will likely be larger due to the nonlinear activation function. The value of $\mathcal{C}_{n_c}$ in a hardware implementation may be significantly lower than on software. In either case, applying feedback will change the computational cost per time step to $\mathcal{O}((n_{\mathrm{grad}}+1)\mathcal{C}_{n_c} + n_{\mathrm{grad}}n_c^2)$ using BPTT for the gradient calculations. The additional memory requirements will only be $\mathcal{O}(n_c N)$ for $N$ training data points coming from the gradient calculation since data from previous gradient descent runs can be discarded.

Here, we present the method to enforce the ESN's stability while updating $V$. For this, we need to make sure that the ESN remains convergent at every step of gradient descent. The constraint on $V$ is given by the constraint for convergence on the ESN following from Eq. \eqref{eq:constraint} with $\overline{A}$ in place of $A$. Formally, the constraint is $\overline{A}^\top \overline{A} < a^2 \mathbb{I}_{\mathrm{dim}(A)}$, where $a$ is a constant value that depends on the nonlinear function $g(z)$ of the ESN. We use the sigmoid function, so we take $a=4$. We ensure that the gradient descent algorithm obeys this constraint by applying a correction to any gradient descent step that causes the new value of $V$ to violate the constraint inequality. This correction is designed to only change the component of $V$ perpendicular to the surface defined by $\overline{A}^\top \overline{A} = a^2 \mathbb{I}_{\mathrm{dim}(A)}$ as a function of $V$, so that gradient descent can still freely adjust $V$ in any direction parallel to this surface.

For some small shift in $V$ given by $\delta V$, the change in $\overline{A}^\top\overline{A}$ is given by
\begin{align}
    \overline{A}^\top\overline{A} &= A^\top A + V (B^\top A) + (A^\top B) V^\top + ||B||^2~V V^\top \\ 
    \delta(\overline{A}^T\overline{A}) &\approx \delta V (B^\top A)+(A^\top B) \delta V^\top + ||B||^2~\delta V  V^\top+||B||^2~V  \delta V^\top \\
    &= \delta V (B^\top\overline{A})+(\overline{A}^\top B) \delta V^\top.
\end{align}
If gradient descent ends up causing the largest singular value $\lambda_{\max}$ of $A$ to reach or exceed $a$, it would cause our ESN to cease being uniformly convergent. We can use the associated normalized eigenvector $u_{\max}$ associated with $\lambda_{\max}$ to get
\begin{align}
    \delta(\lambda_{\max}^2) &= \delta(u_{\max}^\top\overline{A}^\top\overline{A} u_{\max}) \\
    &= u_{\max}^\top\delta(\overline{A}^\top\overline{A}) u_{\max}+2\delta(u_{\max})^\top\overline{A}^\top\overline{A} u_{\max} \\
    &= u_{\max}^\top\delta(\overline{A}^\top\overline{A}) u_{\max}+2\lambda_{\max}^2\delta V^\top\left(\frac{du_{\max}}{dV}\right)^\top u_{\max} \\
    &\approx 2(u_{\max}^\top\delta V)(B^\top\overline{A} u_{\max}) + 0.
\end{align}
We can ignore the dependence of $u_{\max}$ on $W_2$ in this equation because the derivative of a normalized vector is always orthogonal to the original vector, so the first-order shift in each $u_{\max}$ above gets eliminated by the other $u_{max}$. We can solve this in terms of $\delta V$ to get
\begin{align}
\label{eq:correction}
    u_{\max}^\top\delta V &\approx \frac{\delta(\lambda_{\max}^2)}{2(B^\top\overline{A} u_{\max})}.
\end{align}
This formula tells us that we can adjust the singular values of $A$ by using $\delta V' = \delta V - u_{\max}\frac{\Delta}{2(B^\top\overline{A} u_{\max})}$ for our gradient descent step instead of just $\delta V$ for some small positive value $\Delta$. We take $\Delta$ to be $\lambda_{\max}^2 + \delta(\lambda_{\max}^2) - a^2 + \epsilon_a$ for some small positive number $\epsilon_a$. This ensures that the new step $\delta V'$ will keep the singular values of $A$ strictly less than $a$ with a minimal change to the original step $\delta V$, so that the convergence of the gradient descent procedure is minimally impacted. $\lambda_{\max}^2 + \delta(\lambda_{\max}^2) - a^2$ is guaranteed to be of the same order of magnitude as the norm of $\delta V$ because we assume that $V$ leads to convergent dynamics, but $V + \delta V$ does not, so $\lambda_{\max}<a$ but $\lambda_{\max}^2 + \delta(\lambda_{\max}^2)\ge a^2$, and since $\delta(\lambda_{\max}^2)$ is of the same order as $||\delta V||$ the difference $0 \le \lambda_{\max}^2 + \delta(\lambda_{\max}^2) - a^2 < \delta(\lambda_{\max}^2)$ must be as well, where the last inequality is because $\lambda_{\max}<a$. If multiple singular values exceed $a$ due to a single step, we can apply this procedure for each singular value independently since the eigenvectors associated with these singular values are orthogonal, so each adjustment to $\delta V$ has no overlap with any of the other adjustments. We calculate $\lambda_{\max}^2 + \delta(\lambda_{\max}^2)$ and $u_{\max}$ directly from $(A + B(V + \delta V)^\top)^\top(A + B(V + \delta V)^\top)$, while $\epsilon_a$ is chosen to be $10^{-5}$. To the order of approximation used in Eq. \eqref{eq:correction}, we can take $A + B(V + \delta V)^\top\approx\overline{A}$ and use their eigenvectors interchangeably for our calculation of the adjustment to $\delta V$ aside from the value of $\delta(\lambda_{\max}^2)$.

\section{Benchmark Test Results}
\label{sec:tasks}

We conducted numerical demonstrations of ESNs with feedback by focusing on three distinct tasks: the Mackey-Glass task, the Nonlinear Channel Equalization task, and the Coupled Electric Drives task. These tasks are elaborated in the supplementary material of the reference \cite{Hulser:2023} for the first two and in \cite{Wigren:2017} for the latter. Each task represents a unique class of problems. The Mackey-Glass task exemplifies a highly nonlinear chaotic system, challenging the ESN's ability to handle complex dynamics. The Nonlinear Channel Equalization task involves the recovery of a discrete signal from a nonlinear channel, testing the ESN's proficiency in signal processing. Finally, the Coupled Electric Drives task is focused on system identification for a nonlinear stochastic system, evaluating the ESN's performance in modeling and memory retention. Together, these three tasks provide a comprehensive evaluation of ESNs, covering aspects like nonlinear modeling, system memory, and advanced signal processing. This multifaceted approach ensures a thorough assessment of ESN capabilities across various complex systems. We provide a comprehensive table of the simulation parameters we use for each task in Table \ref{tab1}, as well as a comprehensive table of the error measures for our simulations in Table \ref{tab2} toward the end of the section.

The Mackey-Glass task requires the ESN to approximate a chaotic dynamical system described by $y(t)$ in the Mackey-Glass equation:
\begin{align}
    \frac{dy}{dt}(t) = \beta\frac{y(t-\tau)}{1+y^n(t-\tau)}-\gamma y(t),
\end{align}
where we choose the standard values $\beta = 0.2, \tau = 17, n = 10,$ and $\gamma = 0.1$. We numerically approximate the solution to this equation using $y_{k+1} = y((k+1)\delta t) = y_k + \delta t \frac{dy_k}{dt}$ with $\delta t = 1.0$ and $y(0) = 1.0$. We also run the solution for $1000$ steps before using it for the task, or in other words the target sequence we use actually starts with $y_{1000}$. The task for the ESN is to predict what the sequence will be 10 time steps into the future using the past values of the target system. In other words, using the input sequence $\{u_k = y_{k-10}\}$, we want the ESN to successfully predict $\{y_k\}$. In our simulations for this task, we always run our ESNs for 500 time steps of startup, then train them over the next 1000 time steps, then use the following 500 time steps for our test set. We found that for this task, a learning rate of about 25.0 gives the best average performance boost using gradient descent for the feedback procedure.

The second task is the Nonlinear Channel Equalization task. In this task, there is some sequence of digits $\{d_k\}$, each of which can take one of 4 values so that $d_k\in\{-3,-1, 1, 3\}$, that is put through a nonlinear propagation channel. This channel $u_k$ is a polynomial in another linear channel $q_k$, which is in turn a linear combination of 10 different $d_k$ values. The linear channel is given by
\begin{align}
\nonumber q_k =&~0.08 d_{k+2} - 0.12 d_{k+1} + d_k + 0.18 d_{k-1} - 0.1 d_{k-2} \\ 
    &+ 0.091 d_{k-3} - 0.05 d_{k-4} + 0.04 d_{k-5} + 0.03 d_{k-6} + 0.01 d_{k-7}.
\end{align}
The nonlinear transformation $u_k$ of this channel is given by
\begin{align}
\label{eq:channel}
    u_k = q_k + 0.036 q_k^2 - 0.011 q_k^3 + v_k,
\end{align}
where $v_k$ is a Gaussian white noise term with a signal-to-noise ratio of $32\,\mathrm{dB}$. That is, each noise term $v_k$ is a random number generated from a Gaussian distribution with a mean of 0 and a standard deviation given by $\sigma_k = \mathrm{abs}(u_k) / 39.81$, so that the signal-to-noise ratio is $10 \log_{10}\left(\frac{u_k^2}{\sigma_k^2}\right) = 20 \log_{10}\left(39.81\right) \approx 32$. The task for the ESN is to recover the original digit sequence $\{d_k\}$ from the nonlinear channel $\{u_k\}$, which is used as input. In other words, using the input sequence $\{u_k\}$ described by Eq. \eqref{eq:channel}, we want the ESN to produce the digit sequence $\{y_k = d_k\}$ as output. Since the ESN produces a continuous output while the target sequence takes discrete values, we round the output of the ESN to the nearest value in $\{-3,-1, 1, 3\}$ for the final error analysis. However, we still use the continuous outputs during training using the standard cost function described in Eq. \eqref{eq:cost-function}. Just like for the Mackey-Glass task, we always run our ESNs for 500 time steps of startup, train them over the next 1000 time steps, then take the next 500 time steps as our test set. We found that for this task a learning rate of about 10.0 gives the best average performance boost with feedback.

The third test is fitting the Coupled Electric Drives data set, which is derived from a real physical process and is intended as a benchmark data set for nonlinear system identification \cite{Ljung99,Billings13}. In system identification, the ESN is used to approximately model an unknown stochastic dynamical system. This is achieved by tuning the free parameters of the ESN so that it approximates the nonlinear input/output (I/O) map generated by the unknown system
through I/O data generated by the latter. This I/O map sends input sequences $\{\ldots, u_{k-1},u_k\}$ to output sequences  $\{\ldots,y_{k-1},y_k\}$ for all $k$. To this end, the ESN is configured as a nonlinear stochastic autoregressive model following \cite{Chen:2022}, which is briefly explained in \ref{sec:nonlinear stochastic autoregressive model}. Because of the limited data for this task, we run our ESNs for just 19 time steps of startup, then train them over the next 280 time steps and use the following 200 time steps as the test set. We found that a learning rate of about 27.0 gives the best average performance with feedback.

We fit to the output signal labeled $z2$ in \cite{Wigren:2017}, which uses a PRBS input $u2$ with amplitude $1.5$. We model the data using a combination of the input $u2$ and the previous state of the system such that for each time step $k$, the next time step is obtained using an input given by $s \cdot u2_{k} + (1-s) \cdot  y_{k}$ for some parameter $s$. This parameter is chosen by optimizing the cost function of the training data using gradient descent, in much the same way that we optimize the feedback vector $V$. However, this procedure is used to provide information about both $u2$ and the past values of $y$ to the ESN through a single input channel, and is in no way related to our feedback procedure. In the context of the discussion in \ref{sec:nonlinear stochastic autoregressive model}, the function $\nu$ that is defined in that appendix is in this case given by $\nu(x_k,u2_k,y_k)= s \cdot u2_k + (1-s) \cdot y_k + V^{\top}x_k$, where $V=0$ if no state feedback is used, otherwise with feedback the value of $V$ is determined though the I/O data. Through our empirical analysis, we find that the global minimum of the cost function with respect to $s$ is always located near $s=0$, such that the cost function is locally convex in an interval that always contains $s=0$. Thus using gradient descent starting from $s=0$ to find the optimal value of $s$ will always converge to the global minimum of this parameter. In our numerical work below, we choose a learning rate of $0.0012$ without feedback and a learning rate of $0.001$ when using feedback.

\begin{table}
\centering
  \begin{tabular}{|c|c|c|c|c|c|c|}
    \hline
    \multirow{2}{*}{Figure} & \multirow{2}{*}{Training data} & \multirow{2}{*}{Test data} & \# of ESNs & \# of ESNs & Learning rate \\
    & & & (no feedback) & (w/ feedback) & (feedback only) \\ \hline
    \ref{fig:MGmain} & 1000 & 500 & 48000 & 9600 & 25.0 \\ \hline
    \ref{fig:CEmain} & 1000 & 500 & 48000 & 9600 & 10.0 \\ \hline
    \ref{fig:nodes} & 1000 & 500 & 9600 & 9600 & 25.0/10.0 \\ \hline
    \ref{fig:gradsteps} & 1000 & --- & --- & 9600 & 25.0/10.0 \\ \hline
    \ref{fig:CED} & 280 & 200 & 1 & 1 & 27.0 \\ \hline
  \end{tabular}
  \caption{Table of simulation parameters that vary between our figures. Any other model parameters (such as ESN hyperparameters) are either randomized or fixed for all simulations.}
\label{tab1}
\end{table}

In all of the following simulations, we randomly generated the internal parameters of the ESNs under the following rules. For the $A$ matrix, we randomly generated each element of the matrix from a uniform distribution on the interval $[-1,1]$. The interval [-1,1] was chosen because it seems to be generally effective for ESNs using the logistic activation function, and was not optimized for any specific task. In the event that the spectral radius of the matrix is greater than or equal to 4 (see Eq. \eqref{eq:constraint}), we rescale the entire matrix so that the spectral radius is randomly chosen from a uniform distribution on the interval [2,4). For the vector $B$, we similarly generated each element from a uniform distribution on the interval $[-1,1]$. We also employ Tikhonov regularization with a regularization factor of $10^{-10}$. Because we weight the cost function by the number of training data points as in Eq. \eqref{eq:cost-function}, this is equivalent to a value of $10^{-7}$ ($2.8 * 10^{-8}$ for the Coupled Electric Drives task) in other works that do not weight the cost function. When we apply gradient descent, we optimize the learning parameter to produce the best average performance boost on training data. We used 100 steps of gradient descent in all of our feedback simulations, except in Figure \ref{fig:gradsteps} where we analyze the performance of gradient descent.

\begin{figure}
\centering
\includegraphics[width=0.4\linewidth]{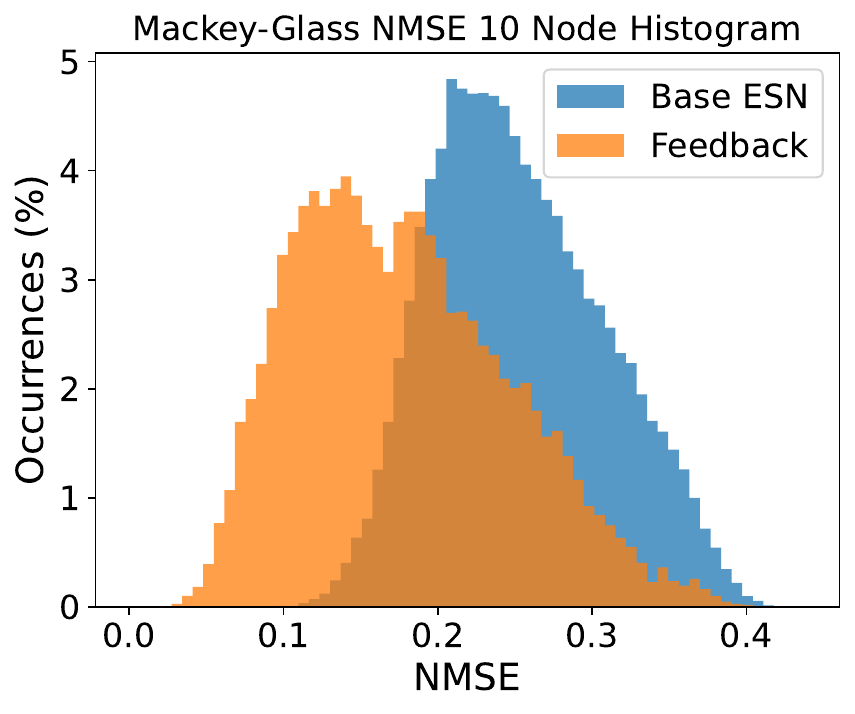}
\includegraphics[width=0.4\linewidth]{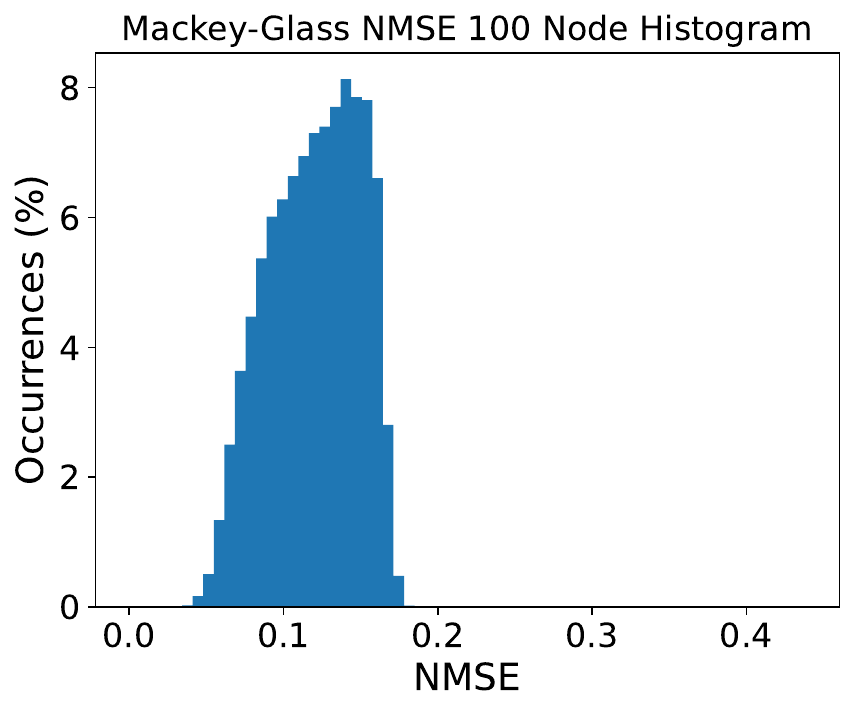}
\caption{Histograms for the NMSE values for the Mackey-Glass task. The left plot shows the NMSE values for ESNs with 10 computational nodes, with 48000 randomly chosen ESNs without feedback (the base ESN) and 9600 choices with feedback. For the feedback optimization, we used 100 steps of batch gradient descent with a learning rate of $25.0$. The right plot uses 48000 randomly chosen ESNs with 100 nodes. In all cases we used 1000 training data points taken after 500 steps of startup, and we show the NMSE values for 500 test data steps after training ends.}
\label{fig:MGmain}
\end{figure}

\subsection{Results on the Mackey-Glass task}
Fig. \ref{fig:MGmain} shows histograms of the NMSE values obtained during the Mackey-Glass task for many different ESNs. Specifically, we calculated the NMSE values for 48000 randomly chosen 10-node ESNs without feedback, another 9600 randomly chosen 10-nodes ESNs with feedback, and 48000 randomly chosen 100-node ESNs without feedback on the 500 test data points. We see that for 10 computational nodes without feedback, the distribution of NMSE values roughly takes the shape of a skewed Gaussian, with an average of about $0.252$ and a standard deviation of about $0.056$, and a longer tail on the right side than the left. With feedback, the distribution shifts significantly toward lower NMSE values, with an average of about $0.177$ and a standard deviation of about $0.069$. This is a roughly $30\%$ reduction of the average NMSE. For 100 computational nodes without feedback, the average is about $0.120$ with a standard deviation of about $0.029$, with a slight skew toward larger NMSE values this time. Note that the 10-node histogram with feedback appears to have two primary peaks, one centered around the lower edge of the 10-node distribution without feedback and one centered much closer to the 100-node average. With a better method of optimizing $V$, it may be possible to get more cases toward the left peak in this distribution and demonstrate results comparable to a 100-node calculation using only 10 nodes with feedback. Such an analysis is beyond the scope of this article, however.

We can also analyze how the NMSE on the 500 test data points works as a model hyperparameter selection criterion between many ESNs with and without feedback. Using the data from Fig. \ref{fig:MGmain}, we can check to see which ESN has the smallest NMSE out of all cases and see whether an ESN with feedback is selected as the best model of the true system. For the Mackey-Glass task, the smallest NMSE value among the 10-node ESNs without feedback is about $0.098$, while for the 10-node ESNs with feedback it is about $0.030$. Remarkably, the minimum NMSE among the 100-node ESNs without feedback is about $0.034$, so even if we include the 100-node ESNs in our set of possible models, the one that is selected for using the minimum NMSE criterion is a 10-node ESN with feedback.

\begin{figure}
\centering
\includegraphics[width=0.4\linewidth]{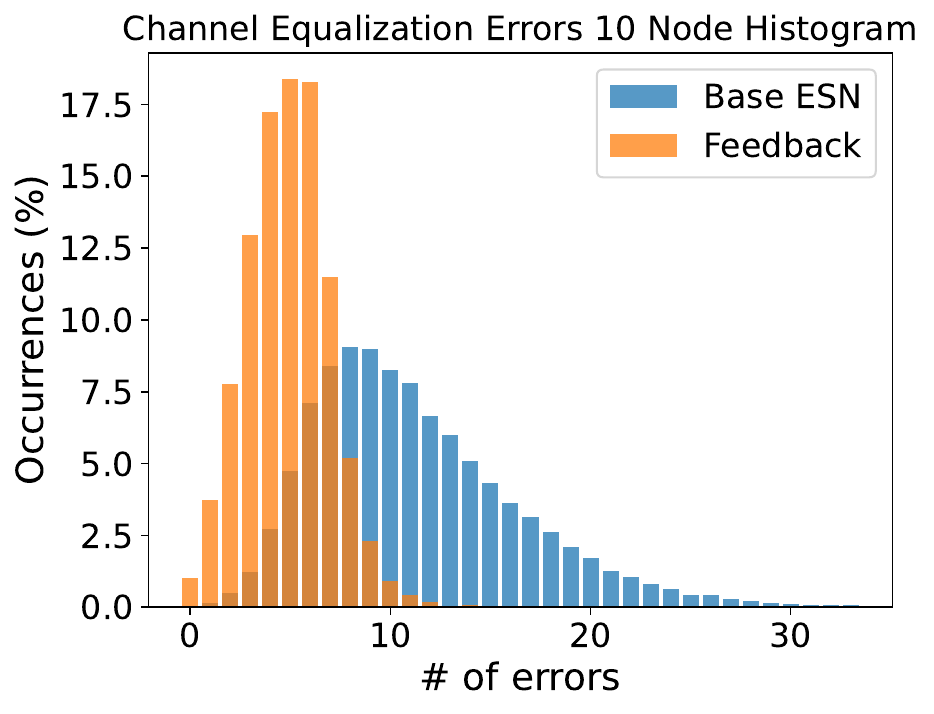}
\includegraphics[width=0.4\linewidth]{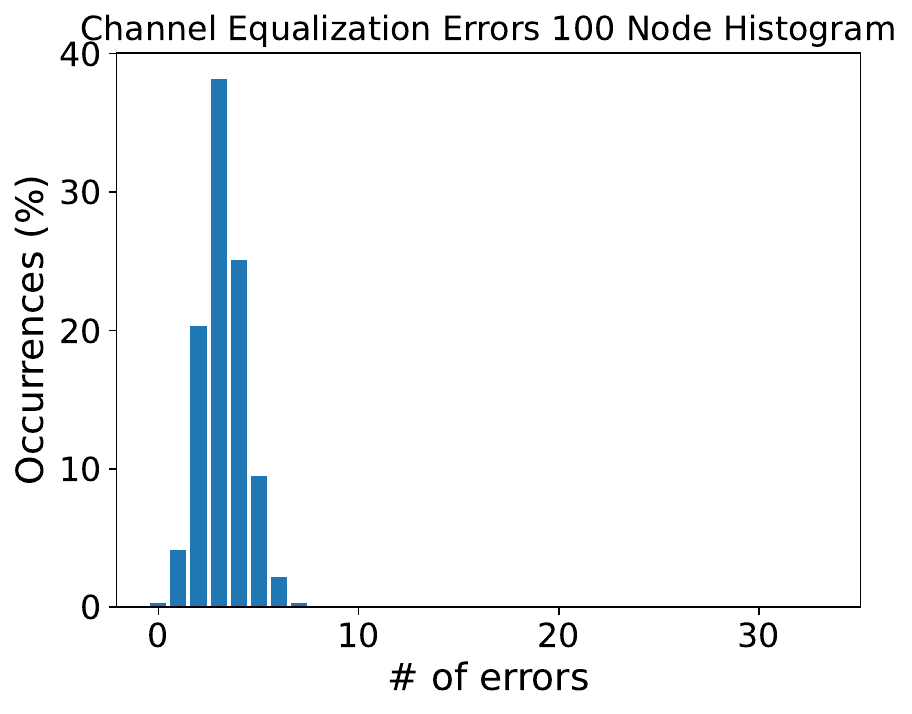}
\caption{Histograms for the errors in the Channel Equalization task. The left plot shows the number of errors for ESNs with 10 computational nodes, with 48000 randomly chosen ESNs without feedback (the base ESN) and 9600 choices with feedback. For the feedback optimization, we used 100 steps of batch gradient descent with a learning rate of $10.0$. The right plot uses 48000 randomly chosen ESNs with 100 nodes. In all cases we used 1000 training data points taken after 500 steps of startup, and we show the number of errors for 500 test data steps after training ends. The errors are counted using $\mathrm{abs}(d_k-\hat{y}_k) / 2$, where $d_k$ is the actual value of the signal at time step $k$, while $\hat{y}_k$ is the prediction by the ESN.}
\label{fig:CEmain}
\end{figure}

\subsection{Results on the Channel Equalization Task}

In Fig. \ref{fig:CEmain}, we show histograms of the number of errors obtained in the Channel Equalization task for many different ESNs. Specifically, we calculated the total number of errors made for 48000 randomly chosen 10-node ESNs without feedback, another 9600 randomly chosen 10-nodes ESNs with feedback, and 48000 randomly chosen 100-node ESNs without feedback. We count the number of errors based on how far off the ESN prediction is from the actual signal, using the expression $|d_k-\hat{y}_k| / 2$. For example, if the true signal value was $d_k=-1$ but the ESN gave us $\hat{y}_k=-3$, this counts as 1 error, but if $d_k=-1$ and $\hat{y}_k=+3$ we count it as 2 errors. We see that for 10 computational nodes without feedback, the distribution of total error values also takes the shape of a skewed Gaussian, with an average of about $11.29$ errors and a standard deviation of about $5.20$, and a longer tail on the right side than the left. With feedback, the distribution again shifts significantly toward a lower number of errors, with an average of about $4.89$ errors and a standard deviation of about $2.08$, a nearly $57\%$ reduction to the average error count. For 100 computational nodes without feedback, the average is about $3.22$ errors with a standard deviation of about $1.11$.

In this task, all of the distributions roughly adhere to the shape of a Gaussian with a long tail on the right. This is in contrast to the Mackey-Glass task, where each distribution had a slightly different skew. The feedback procedure has definitely improved the average performance of 10-node ESNs, but unlike the Mackey-Glass task there is no secondary peak, so it may be the case that in most of the cases the gradient descent algorithm has settled near a minimum and will not improve the results further. Still, the 10-node average with feedback is much closer to the 100-node result on average than the 10-node result.

Using the data from Fig. \ref{fig:CEmain}, we can perform a model hyperparameter selection based on which ESN makes the fewest errors out of all cases to see whether an ESN with feedback provides the best model of the true system. In this task, all 3 samples of ESNs contain models that make no errors on the test set. However, 99 of the 9600 10-node ESNs with feedback make no errors, but only 11 of the 48000 ESNs without feedback perform similarly. Accounting for the difference in sample size, this means that adding feedback made it roughly 45 times more likely to randomly generate an ESN with perfect performance on the test set. Among the 100-node ESNs without feedback, there were 154 out of 48000 ESNs that made no errors, which indicates that after adjusting for sample size a random 10-node ESN is still more likely to perform perfectly than a random 100-node ESN without. In terms of model hyperparameter selection, after adjusting for the smaller sample size of the 10-node ESNs with feedback, the type of ESN that would be most likely to be randomly selected among the models that make no errors would be a 10-node ESN with feedback.

\begin{figure}
\centering
\includegraphics[width=0.4\linewidth]{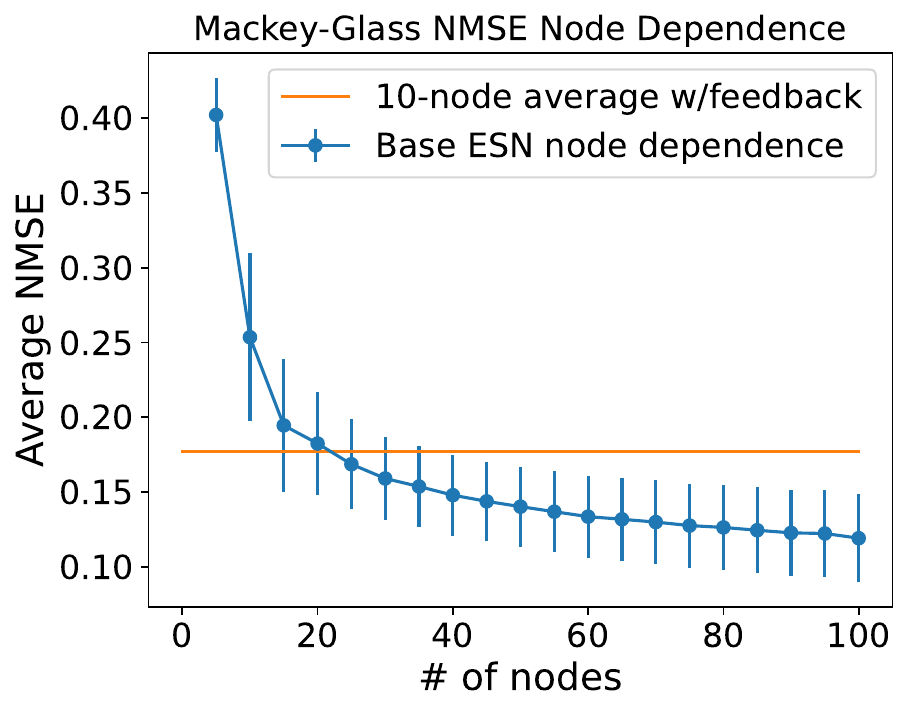}
\includegraphics[width=0.4\linewidth]{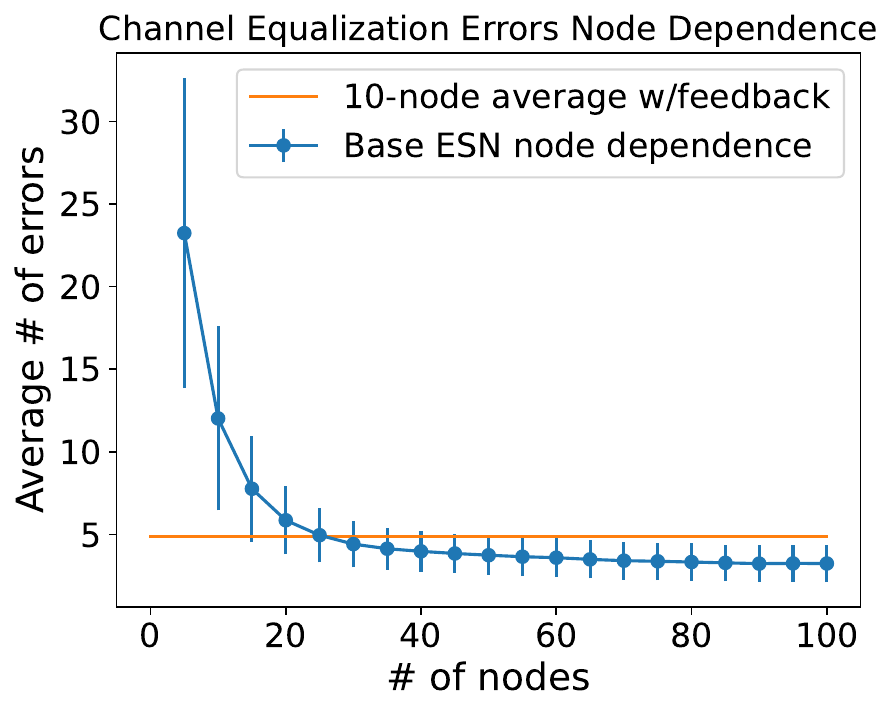}
\caption{Plots of the average NMSE and error values for the Mackey-Glass and Channel Equalization tasks, respectively, as a function of computational nodes. The average is taken over 9600 randomly chosen ESNs, and the error bars represent the standard deviation of the distribution for each number of nodes. In all cases we used 1000 training data points taken after 500 steps of startup, and we show the NMSE and total error values for 500 test data steps after training ends. We also include a line showing the average NMSE and total errors for 10 nodes with feedback for reference.}
\label{fig:nodes}
\end{figure}

\subsection{Node Dependence}

Fig. \ref{fig:nodes} shows the average dependence of the error of the ESN without feedback as a function of the number of nodes, averaging over 9600 randomly chosen ESNs for each data point, using the NMSE for the Mackey-Glass task and the total number of errors in the Channel Equalization task. We also include the average value of the 10-node results with feedback for comparison. The error bars in each plot represent the standard deviation associated with each specific number of nodes. We see that, on average, using feedback on a 10-node ESN with 100 steps of gradient descent for optimizing $V$ is roughly equivalent to a little more than a $20$ node calculation for the Mackey-Glass task, while it is closer to a $25$ or $30$ node calculation for Channel Equalization.

The main reason for this discrepancy is because the Mackey-Glass task is significantly more difficult for the ESN than the Channel Equalization task. In Mackey-Glass, we are asking the network to predict $10$ time steps into the future, but not all ESNs have a memory capacity going back $10$ steps, especially with only $10$ computational nodes. In contrast, the Channel Equalization task is much easier because the ESN does not have to reproduce the exact digits $\{-3,-1, 1, 3\}$, it only has to achieve a difference of less than 1 to be considered correct, so there is more room for error on with the continuous output of the ESN. This is evidenced by the fact that the nodal dependence in the Mackey-Glass task seems to continue decreasing almost linearly near 100 nodes, while for Channel Equalization the dependence is close to zero as the ESN is almost perfectly reproducing the signal with 100 nodes. This also suggests that the results for the Mackey-Glass task could see further improvement with feedback using a better method for optimizing $V$, since even the addition of computational nodes seems to converge slowly.

\begin{figure}
\centering
\includegraphics[width=0.4\linewidth]{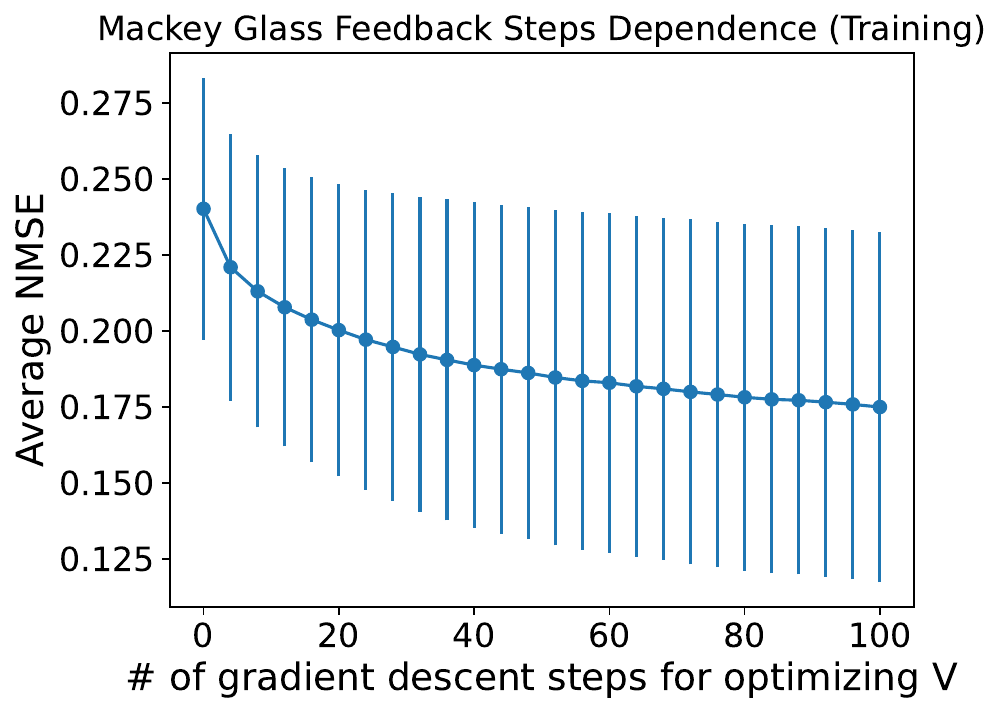}
\includegraphics[width=0.46\linewidth]{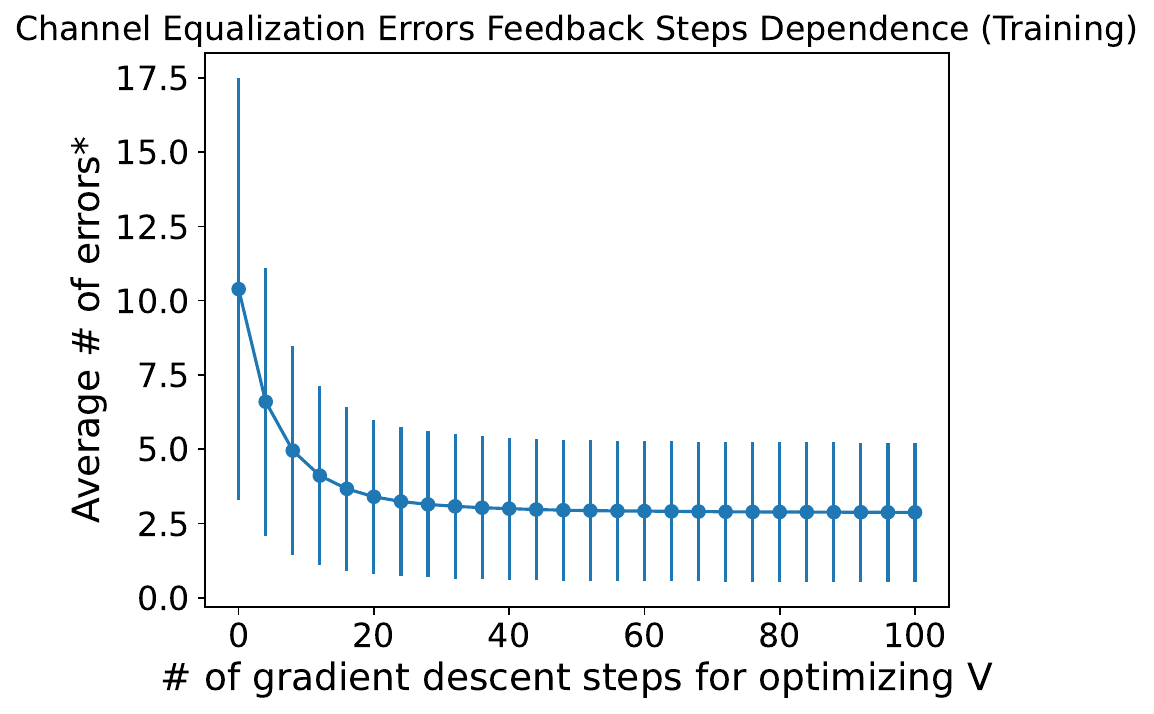}
\caption{Plots of the average NMSE and error values for the Mackey-Glass and Channel Equalization tasks, respectively, as a function of batch gradient descent steps for optimizing $V$, with a learning rate of $25.0$ for Mackey-Glass and $10.0$ for Channel Equalization. The average is taken over 9600 randomly chosen ESNs, and the error bars represent the standard deviation of the distribution for each number of nodes. In all cases we used 1000 training data points taken after 500 steps of startup. These plots shows the number of errors for the training data. The asterisk in the second plot indicates that we have rescaled the average number of errors by a factor of $1/2$ to be directly comparable with the other figures in this work.}
\label{fig:gradsteps}
\end{figure}

\subsection{Gradient Descent Step Dependence}

In Fig. \ref{fig:gradsteps}, we show the average dependence of the error of 10-node ESNs with feedback as a function of the number of gradient descent steps for optimizing $V$, averaging over 9600 randomly chosen ESNs for each data point, using the NMSE for the Mackey-Glass task and the total number of errors in the Channel Equalization task. Note that these plots show the NMSE and total errors for the training data set of $1000$ steps, as opposed to all of the previous plots which use the $500$ times immediate after training. This is why we have rescaled the average number of errors by a factor of $1/2$ in the plot for the Channel Equalization task: we are checking the errors of 1000 steps for each ESN in these plots instead of 500 like all the others, so the total number of errors is doubled as a result, hence the rescaling. The error bars in each plot represent the standard deviation associated with each specific number of gradient descent steps.

Here, we observe that for the Mackey-Glass task the gradient descent algorithm still has not fully converged even after 100 steps, and the variance in the performance is very large. In contrast, the gradient descent algorithm for the Channel Equalization task appears to have converged on average after about $25$ to $30$ steps to a value of about $3$, with a moderately large standard deviation. This corroborates the discussion of Fig. \ref{fig:nodes}, where we see that the ESN has trouble with the Mackey-Glass task, and so the convergence is slow and highly dependent on the specific ESN. Meanwhile, the Channel Equalization task is easier, and so the convergence occurs faster and more consistently. This further motivates using a better optimization method for $V$ to get the most we can out of an ESN for task like Mackey-Glass. For easier tasks like Channel Equalization we have likely already done the best we can with batch gradient descent, which seems to indicate that feedback can be more useful that adding an equal number of parameters as new computational nodes, at least for small ESNs.

\subsection{Results on the Coupled Electric Drives task}
\begin{figure}
\centering
\includegraphics[width=0.45\linewidth]{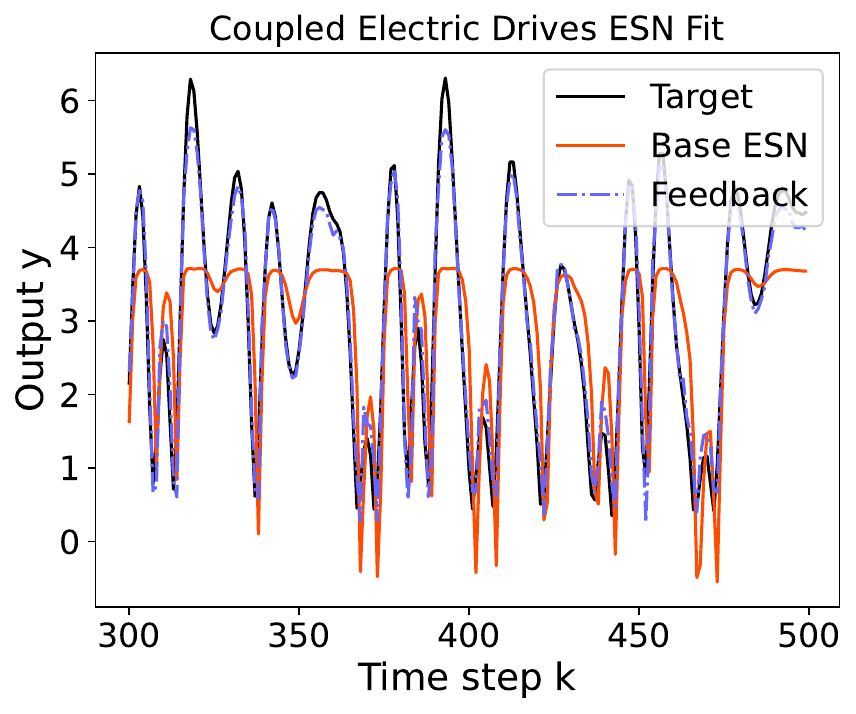}
\includegraphics[width=0.5\linewidth]{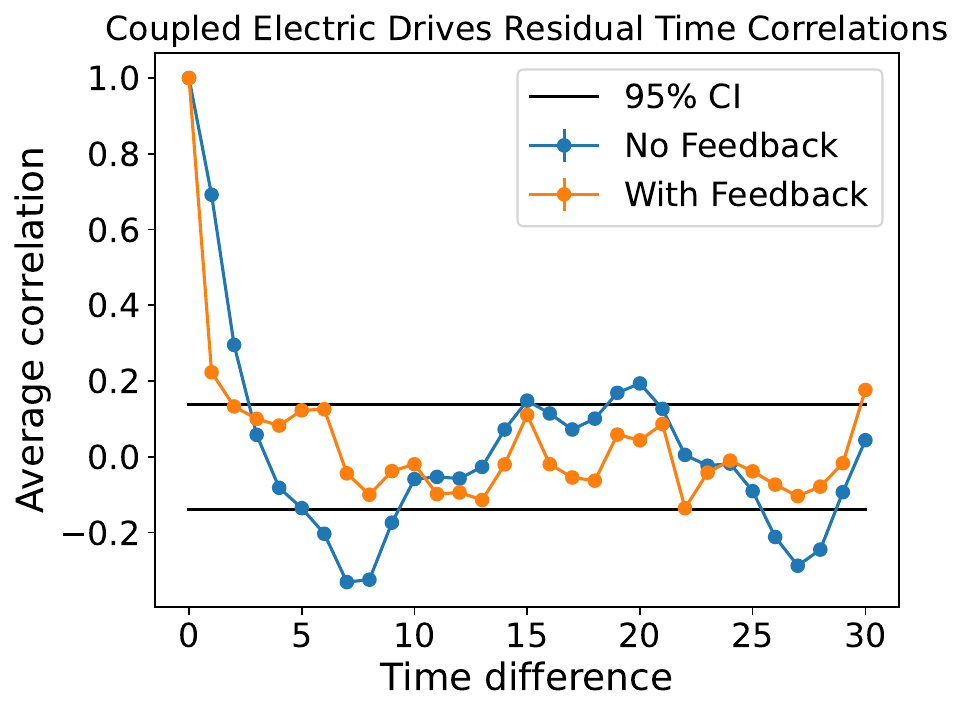}
\caption{Plots of the performance of a specific ESN with and without feedback for the Coupled Electric Drives task. In both cases we used an ESN with 2 computational nodes, with 280 training data points taken after 19 steps of startup, and we show the results for 200 test data steps after training ends. For the feedback, we used 100 steps of batch gradient descent with a learning rate of $27.0$. The left plot shows the ESN outputs with and without feedback (the base ESN) for the target data set along with the target data. In the right plot, we show the average correlation between time steps of the residuals $e_k = y_k - \hat{y}_k$, where the time difference denotes the difference between the time steps of the residuals involved in the average. The $95\%$ confidence interval denotes that there is a $95\%$ chance that an i.i.d. normal distribution would produce a correlation within this interval, indicating that data found largely within the interval are consistent with a normal distribution.}
\label{fig:CED}
\end{figure}

In Fig. \ref{fig:CED}, we show plots of one ESN's fit to the Coupled Electric Drive data given in \cite{Wigren:2017}. We specifically use $z2$, the PRBS signal with an amplitude of $1.5$. Note that since this data set contains only 500 data points, we use significantly less training data and test data than before, with only $280$ and $200$ steps, respectively. We are also only using 2 computational nodes here as opposed to $10$ or more in the previous tasks. This is in accordance with \cite{Chen:2022}, where it was shown that ESNs with 2 computational nodes perform well on the $z3$ Coupled Electric Drives data set. The first plot shows the fit to test data for a specific choice of of ESN with and without feedback. We see that in this specific instance the original ESN has some trouble fitting to the data, but with feedback the fit becomes much better, nearly matching the target data. The NMSE value without feedback was about $0.43$, but with feedback it is reduced by a full order of magnitude to about $0.032$.

The right plot shows the average correlations between the residuals $e_k = y_k - \hat{y}_k$ of the test data for a fixed time difference. This is calculated using the formula
\begin{align}
    R_{k} &= \frac{1}{N-1}\sum_{j=0}^{N-k-1}\frac{(e_j-\overline{e})(e_{j+k}-\overline{e})}{\sigma_e^2},
\end{align}
where $\overline{e}$ and $\sigma_e^2$ are the sample mean and variance of the residuals. This measures the correlation between the residuals at different time steps. If the residuals correspond to white noise, then $95\%$ of the correlations would fit into the confidence interval shown in the plot. We see that in the original ESN, there is a strong correlation between residuals that are one and two time steps apart, along with anti-correlations when they are 5 to 10 steps apart as well as from 25 to 30 step apart. With feedback, the one step correlation is significantly reduced, and all but one of the other correlations are found within the confidence interval. This suggests that the ESN without feedback was not completely capturing part of the correlations in the test data, but with feedback the accuracy is improved to the point that most of the statistically significant correlations have been eliminated. We also checked that the residuals are consistent with Gaussian noise using the Lilliefors test \cite{Lilliefors:1967, Abdi:2007} and checking a Q-Q plot \cite{Wilk:1968} against the CDF of a normal distribution. We found that both with and without feedback, the residuals pass the $n=50$ Lilliefors test and follow a roughly linear trend on the Q-Q plot. Finally, we checked the correlation between the residuals and the input $u2$ to see whether the residual noise is uncorrelated with the input and, therefore, unrelated to the system dynamics. We found that for this ESN, the residuals without feedback show a statistically significant anti-correlation of about $-0.22$, below the $95\%$ confidence threshold of $-0.14$. With feedback, the correlation becomes about $-0.11$, a significant reduction in magnitude that now puts it within the confidence interval.

The average behavior of feedback for this task also shows good improvement. Taking an average over 9600 different ESNs (including the one in Fig. \ref{fig:CED}), we find that the average NMSE without feedback using 2 computational nodes is about $0.178$ with a standard deviation of about $0.0779$, but with feedback this average goes down to $0.0723$ with a standard deviation of about $0.0448$. This is a roughly $59\%$ reduction of the average NMSE. Applying the NMSE on the 200 test data points as a model hyperparameter selection criterion for this task, we find that an ESN with feedback is selected for with an NMSE of about $0.024$, compared to a minimum of about $0.036$ without feedback. We also find that the average correlation with the input $u2$ is about $-0.183$ without feedback, below the $95\%$ confidence threshold of $-0.139$, but with feedback it is on average above the threshold with a value of about $-0.132$. Additionally, the average one time step difference correlation between residuals is about $0.69$ without feedback, but with feedback it reduces to about $0.48$. However, this does not reduce it into the $95\%$ confidence interval for Gaussian white noise, suggesting that there is still room for improvement with feedback. We note that batch gradient descent may not be the most effective choice for optimizing $V$ for this task, especially considering the use of a large learning rate of $27.0$ to get these results. If we were able to reach the true global minimum of $V$, we may be able to much more consistently reach a scenario like the one shown in Fig. \ref{fig:CED}.

\begin{table}
  \centering
  \begin{tabular}{|c|c|c|c|c|c|c|}
    \hline
    Task & \multicolumn{2}{|c|}{Mackey-Glass} & \multicolumn{2}{|c|}{Channel Equalization} & Electric Drives \\ \hline
    Nodes & 10 & 100 & 10 & 100 & 2 \\ \hline
    No Feedback (avg) & 0.252 & 0.120 & 11.29 & 3.22 & 0.178 \\ \hline
    Feedback (avg) & 0.177 & --- & 4.89 & --- & 0.072 \\ \hline
    No Feedback (best) & 0.098 & 0.034 & 0 (0.02\%) & 0 (0.32\%) & 0.036 \\ \hline
    Feedback (best) & 0.030 & --- & 0 (1.03\%) & --- & 0.024 \\ \hline
  \end{tabular}
  \caption{Table of values for the error measures in each of the tasks, organized by number of nodes, whether feedback was applied, and whether the average over many ESNs or the best ESN of the set is chosen. For the Mackey-Glass and Coupled Electric Drives tasks, the NMSE is used for the error measure, while for the Channel Equalization task we use the total number of errors as the error measure. For the best ESN in the Channel Equalization task, since each category has ESNs that made no errors on the test set, we also give the percent of ESNs that made no errors within the total number of ESNs in that category.}
  \label{tab2}
\end{table}

\section{Discussions and Conclusion}
\label{sec:conclude}

In this work, we have introduced a new method for improving the performance of an ESN using a feedback scheme. This scheme uses a combination of the existing input channel of the ESN and the previously measured values of the network state as a new input, so that no direct modification of the ESN is necessary. We proved rigorously that using feedback is almost always guaranteed to provide an improvement to the performance, and that the number of cases in which such an improvement is not possible using batch gradient descent is vanishingly small compared to all possible ESNs. In addition, we proved rigorously that such a feedback scheme provides a superior performance over the whole class of ESNs on average. We laid out the procedure for optimizing the ESN as a function of the fitting parameters $C, W,$ and $V$, exactly solving for $C$ and $W$ while using batch gradient descent on $V$ for a fixed number of steps. We then demonstrated the performance improvements for the Mackey-Glass, Channel Equalization, and Coupled Electric Drives tasks, and commented on how the relative difficulty of the tasks affected the results. The ESNs with feedback exhibited outstanding performance improvement in the Channel Equalization and Coupled Electric Drives tasks, and we observed a roughly $57\%$ and $59\%$ improvement in their respective error measures. For the more difficult Mackey-Glass task requiring a 10-step-ahead prediction, we still saw a roughly $30\%$ improvement in the averaged NMSE, showing that feedback produces a significant boost in performance for a variety of tasks. These ESNs with feedback were shown to perform just as well on average, if not better than, the ESNs that have double the number of computational nodes without feedback. 

Our feedback method bares some partial resemblance to the FORCE learning algorithm \cite{SA09} that was discussed earlier in the introduction. One may wonder how this method's performance compares to the results we provide for our feedback procedure, but a fair comparison cannot be made for the tasks presented in this paper. The reason for this is that the ESN with feedback utilizes a control sequence as input, whereas the FORCE learning algorithm runs autonomously without input after training; if there is any input considered in \cite{SA09} they are simply switching signals to designate which output waveform should be generated rather than random inputs such as a PBRS signal. Both the Channel Identification and Coupled Electric Drives tasks involve producing an output based off of an input sequence that is largely uncorrelated in time, so an autonomous method like FORCE learning cannot be applied to these tasks. This leaves the Mackey-Glass task, which only uses prior values of the target sequence as input for the ESN. However, while FORCE learning has been shown to have decent performance on the task with 100 computational nodes \cite{Li:2022}, standard ESNs have been able to perform exceedingly accurate one-step-ahead predictions for this task with the same number of nodes \cite{shahi2022prediction}. We also performed simulations that corroborate the general performance of the above references. Additionally, it is not clear how to appropriately compare the 10-steps-ahead prediction of the Mackey-Glass system that we investigate in this work to FORCE results. Therefore, we also do not believe a comparison of our ESN with feedback performance with FORCE learning results for the Mackey-Glass task will be instructive on the practical use of our feedback procedure.

Although there will be an additional hardware modification required to implement feedback, such a modification will only be external to the reservoir computer and, therefore, the burden will be minimal. This feedback scheme is designed to avoid any direct modification of the ESN's main body (i.e., the computing bulk) since we only need to take the readout of the network and send some component of that readout back into the network with the usual input. Thus, we will only need an apparatus that connects to the readout and the input of the ESN, but does not require modifying the internal reservoir. Given that our results suggest that ESNs with feedback will perform just as well as, if not better than, ESNs of double the number of nodes without feedback, the cost-benefit analysis of adding feedback hardware is very likely to be more favorable than increasing the size of the ESN to achieve similar performance.

Because of the highly complex and nonlinear dependence of the network states $\{x_k\}$ on $V$, we used batch gradient descent for the optimization of $V$, but better methods may very well exist. The question of how to best optimize $V$ is indeed closely related to how to choose the best $(A,B)$ for a given training data set $\{u_k\}$ and $\{y_k\}$. This is an open question for which any progress would be monumental in the general theoretical development of reservoir computing and neural networks. Even providing just a measure of the computing power of a given $(A,B)$ or  $(A,B,\{u_k\})$ outside of the cost function itself could provide some classification of tasks that will save us significant computational time and resources in the future.

\appendix

\section{Batch Gradient Descent Method for Optimizing \texorpdfstring{$V$}{TEXT}} \label{sec: Details of Batch Gradient Descent Method used in Numerical Examples}

At the beginning of each step, we train the ESN using $\{u_k + V_i^T x_k\}$ as the input sequence for the current feedback vector $V_i$, with $V_0=\mathbf{0}$ as the initial step. To get the change in $V$, we use the gradient of $S_{\mathrm{min}}$ given by 
\begin{align}
    \frac{d S}{d V_j} &= \frac{1}{N}\sum_{k=0}^{N-1}\left(W^\top\frac{dx_k}{dV_j}\right)(W^\top x_k-y_k) \\
    \frac{dx_k}{dV_j} &= \Sigma_k\left(B~x_{k-1,j}+\overline{A}\frac{dx_{k-1}}{dV_j}\right) \\
    \Sigma_{k,ij} &= \delta_{ij} \sigma'(z_{k-1,i})=\delta_{ij} x_{k,i}(1-x_{k,i}),
\end{align}
where $\overline{A} = A + B V^\top$. The derivatives $\frac{dx_k}{dV_j}$ can be calculated by iteration starting from the initial condition $\frac{dx_0}{dV_j} = \mathbf{0}$. Then we simply shift $V \rightarrow V-\eta \nabla_V S_{\mathrm{min}}$ for some learning rate $\eta$ at each step of the descent, using the optimal solutions for $C$ and $W$ for the current value of $V$.

\section{Nonlinear Stochastic Autoregressive Model} \label{sec:nonlinear stochastic autoregressive model}

The basic model is
\begin{align*}
x_{k+1} &= g(Ax_k + B\nu (u_k,y_k)) \\
\hat{y}_k &= W^{\top} x_k +C,
\end{align*}
where $\nu$ is some function of $u_k$ and $y_k$  taking values in $\mathbb{R}^n$ and $\{(u_k,y_k)\}$ are I/O pairs generated at time $k$ by the stochastic dynamical system of interest. The quantity $\hat{y}_k$ is the output of the ESN and functions as an approximation of $y_k$. Assuming convergence, after washout $\hat{y}_k$ can be expressed as \cite{Chen:2022}:
$$
\hat{y}_k = F(y_{k-1},y_{k-2},\ldots,u_{k-1},u_{k-2},\ldots),
$$
for some nonlinear functional $F$ of past input and output values, $u_{k-1},u_{k-2},\ldots$  and $y_{k-1},y_{k-2},\ldots$, respectively. Letting $e_k$ be the approximation error of $y_k$ by $\hat{y}_k$, $e_k = y_k - \hat{y_k}$, the ESN then generates a nonlinear infinite-order autoregressive model with exogenous input:
$$
y_k = F(y_{k-1},y_{k-2},\ldots,u_{k-1},u_{k-2},\ldots) + e_k.
$$
To complete the model, it is stipulated that $\{e_k\}$ is Gaussian white noise sequence that is uncorrelated with the input sequence $\{u_k\}$. By making the substitution $y_k = \hat{y}_k+e_k = W^{\top}x_k + C + e_k$, the system identification procedure identifies an approximate state-space model of the unknown stochastic dynamical system given by:
\begin{align*}
x_{k+1} &= f(x_k,u_k,e_k)\\
y_k &= W^{\top}x_k + e_k,
\end{align*}
where 
$$
f(x_k,u_k,e_k) = g(Ax_k + B\nu(u_k, W^{\top}x_k  +e_k)),
$$
where $e_k$ and $u_k$ free inputs to the model. The stochasticity in the model comes from the free noise sequence $\{e_k\}$ and possibly also the input $\{u_k\}$ (typically the case in system identification). This model can then be used, for instance, to design stochastic control laws $u_k$ for the unknown stochastic dynamical system or to simulate it on a digital computer. The hypothesis of the model, that $\{e_k\}$ is a Gaussian white noise sequence that is uncorrelated with the input sequence $\{u_k\}$ is tested after the model is fitted using a separate validation data set (different from the fitting data set) by residual analysis; see, e.g., \cite{Ljung99,Billings13}.

\bibliographystyle{elsarticle-num}
\bibliography{refs}

\end{document}